\newtheorem{assumption}{Assumption}
\newtheorem{corollary}{Corollary}
\newtheorem{theorem}{Theorem}
\newcommand{\pengxin}[1]{\textcolor{black}{#1}}
\title{A New Federated Learning Framework Against Gradient Inversion Attacks}
\author{
    Pengxin Guo\textsuperscript{\rm 1}\equalcontrib, Shuang Zeng\textsuperscript{\rm 2}\equalcontrib, Wenhao Chen\textsuperscript{\rm 1}, Xiaodan Zhang\textsuperscript{\rm 3}, Weihong Ren\textsuperscript{\rm 4}, Yuyin Zhou\textsuperscript{\rm 5}, Liangqiong Qu\textsuperscript{\rm 1}\footnote{Corresponding author.}
}
\begin{document}

\maketitle

\begin{abstract}
Federated Learning (FL) aims to protect data privacy by enabling clients to collectively train machine learning models without sharing their raw data. However, recent studies demonstrate that information exchanged during FL is subject to Gradient Inversion Attacks (GIA) and, consequently, a variety of privacy-preserving methods have been integrated into FL to thwart such attacks, such as Secure Multi-party Computing (SMC), Homomorphic Encryption (HE), and Differential Privacy (DP). Despite their ability to protect data privacy, these approaches inherently involve substantial privacy-utility trade-offs. By revisiting the key to privacy exposure in FL under GIA, which lies in the frequent sharing of model gradients that contain private data, we take a new perspective by designing a novel privacy preserve FL framework that effectively ``breaks the direct connection'' between the shared parameters and the local private data to defend against GIA. Specifically, we propose a Hypernetwork Federated Learning (HyperFL) framework that utilizes hypernetworks to generate the parameters of the local model and only the hypernetwork parameters are uploaded to the server for aggregation. Theoretical analyses demonstrate the convergence rate of the proposed HyperFL, while extensive experimental results show the privacy-preserving capability and comparable performance of HyperFL. Code is available at https://github.com/Pengxin-Guo/HyperFL.
\end{abstract}

%

\section{Introduction}
\label{sec:intro}

Deep Neural Networks (DNN) have achieved remarkable success on a variety of computer vision tasks, relying on the availability of a large amount of training data \cite{krizhevsky2012imagenet, he2016deep, dosovitskiy2021image, liu2021swin, wang2023learning}. However, in many real-world applications, training data is distributed across different institutions, and data sharing between these entities is often restricted due to privacy and regulatory concerns. To alleviate these concerns, Federated Learning (FL) \cite{mcmahan2017communication, li2020bfederated, qu2022rethinking, zeng2024tackling, guo2024selective, zhang2024flhetbench} has emerged as a promising approach that enables collaborative and decentralized training of AI models across multiple institutions without sharing of personal data externally. 

\begin{figure}
  \centering
   \includegraphics[width=0.9\linewidth]{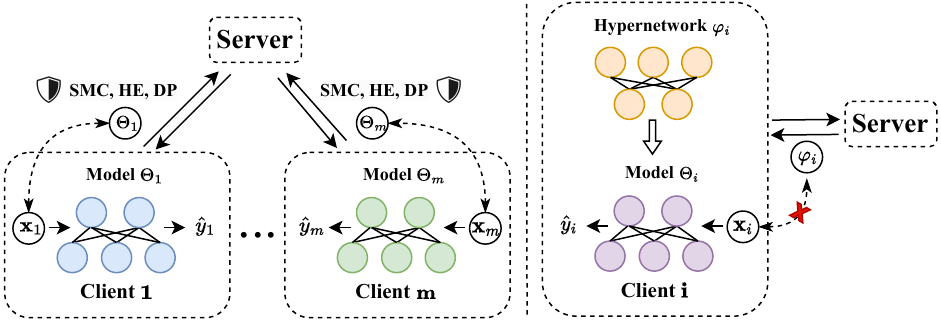}
   \caption{\textit{Left}. Existing methods mainly explore defenses mechanisms on the shared gradients. Such mechanisms, including SMC, HE, and DP, inherently involve substantial privacy-utility trade-offs. \textit{Right}. A novel FL framework that “breaks the direct connection” between the shared parameters and the local private data is proposed to achieve a favorable privacy-utility trade-off.}
\vskip -0.2in
\label{fig:motivation}
\end{figure}


Despite the privacy-preserving capability introduced by FL, recent works \cite{geiping2020inverting, huang2021evaluating, hatamizadeh2023gradient} have revealed that FL models are vulnerable to Gradient Inversion Attacks (GIA) \cite{fredrikson2015model, zhu2019deep}. GIA can reconstruct clients' private data from the shared gradients, undermining FL's privacy guarantees. To remedy this issue, various defense mechanisms have been integrated into FL, including Secure Multi-party Computing (SMC) \cite{yao1982protocols, bonawitz2017practical, mugunthan2019smpai, mou2021verifiable, xu2022non}, Homomorphic Encryption (HE) \cite{gentry2009fully, zhang2020batchcrypt, zhang2020privacy, ma2022privacy, park2022privacy} and Differential Privacy (DP) \cite{dwork2006differential, geyer2017differentially, mcmahan2018learning, yu2020salvaging, bietti2022personalization, shen2023share}. These approaches primarily rely on existing defense mechanisms to enhance privacy protection against GIA without altering the FL framework, as illustrated in the left part of Figure \ref{fig:motivation}. However, such mechanisms, including SMC, HE, and DP, inherently involve substantial privacy-utility trade-offs. 
For example, SMC and HE, while providing strong security guarantees through encrypted information exchange, entail high computation and communication costs, making them unsuitable for DNN models with numerous parameters \cite{bonawitz2017practical, zhang2020privacy}. Although DP approaches are easier to implement, they often fall short in providing sufficient model protection while preserving accuracy \cite{geyer2017differentially, mcmahan2018learning}. These trade-offs within current defense algorithms have inspired us to explore alternative privacy-preserving methods that strike a better balance between privacy and utility, leading to the central question of this paper:

\begin{center}
    \textit{\quad Can we design a novel FL framework that offers a favorable privacy-utility trade-off against GIA without relying on existing defense mechanisms?}
\end{center}

To this end, we revisit the key to privacy exposure in FL under GIA, which lies in the frequent sharing of model gradients that contain private data. 
Most efforts aim at exploring various advanced defenses mechanisms on the shared gradients to enhance privacy preservation in FL, as shown in the left part of Figure \ref{fig:motivation}. In contrast, we take a new perspective striving for designing a novel privacy preserve FL framework that ``breaks the direct connection'' between the shared parameters and the local private data to defend against GIA.
In order to achieve this, we explore the potential of hypernetworks, a class of deep neural networks that generate the weights for another network \cite{david2017hypernetworks}, as a promising solution, as illustrated in the right part of Figure \ref{fig:motivation} and Figure \ref{fig:framework}. 

Specifically, we introduce a novel Hypernetwork Federated Learning (HyperFL) framework that adopts a dual-pronged approach—network decomposition and hypernetwork sharing— to ``break the direct connection'' between the shared parameters and the local private data, as shown in Figure~\ref{fig:framework}. In light of recent findings regarding minimal discrepancies in feature representations and considerable diversity in classifier heads among FL clients \cite{collins2021exploiting, xu2023personalized, shen2023share}, we decompose each local model into a shared feature extractor and a private classifier, enhancing performance in heterogeneous settings and mitigating privacy leakage risks. 
To further strengthen privacy preservation, we employ an auxiliary hypernetwork that generates feature extractor parameters based on private client embeddings. Instead of directly sharing the feature extractor, only the hypernetwork parameters are uploaded to the server for aggregation, while classifiers and embeddings are trained locally. This auxiliary hypernetwork sharing strategy ``breaks the direct connection" between shared parameters and local private data, maintaining privacy while enabling inter-client interaction and information exchange. 

Remarkably, the design of HyperFL is flexible and scalable, catering to a diverse range of FL demands through its various configurations. We present two major configurations of HyperFL: (1) Main Configuration HyperFL, suitable for simple tasks with small networks, learns the entire feature extractor parameters directly (see Figure \ref{fig:framework}); (2) HyperFL for Large Pre-trained Models (denoted as HyperFL-LPM) targets for complex tasks by using pre-trained models as fixed feature extractors and generating trainable adapter parameters via a hypernetwork \cite{houlsby2019parameter} (see Figure \ref{fig:framework_v2}).
Both theoretical analysis and extensive experimental results demonstrate that HyperFL effectively preserves privacy under GIA while achieving comparable results and maintaining a similar convergence rate to FedAvg \cite{mcmahan2017communication}. We hope that the proposed HyperFL framework can encourage the research community to consider the importance of developing new enhanced privacy preservation FL frameworks, as an alternative to current research efforts on defense mechanisms front.

We summarize our contributions as follows:
\begin{itemize}
    \item To defend against GIA, we take a new perspective by designing a novel privacy preserve FL framework that effectively ``breaks the direct connection'' between the shared parameters and the local private data and propose the HyperFL framework.
    \item We present two major configurations of HyperFL: (1) Main Configuration HyperFL, suitable for simple tasks with small networks, learns the entire feature extractor parameters directly; (2) HyperFL-LPM targets for complex tasks by using pre-trained models as fixed feature extractors and generating trainable adapter parameters via a hypernetwork.
    \item Both theoretical analysis and extensive experimental results demonstrate that HyperFL effectively preserves privacy under GIA while achieving comparable results and maintaining a similar convergence rate to FedAvg.
\end{itemize}


\section{Related Work}
\label{sec:related_work}

\begin{figure*}
  \centering
   \includegraphics[width=0.8\linewidth]{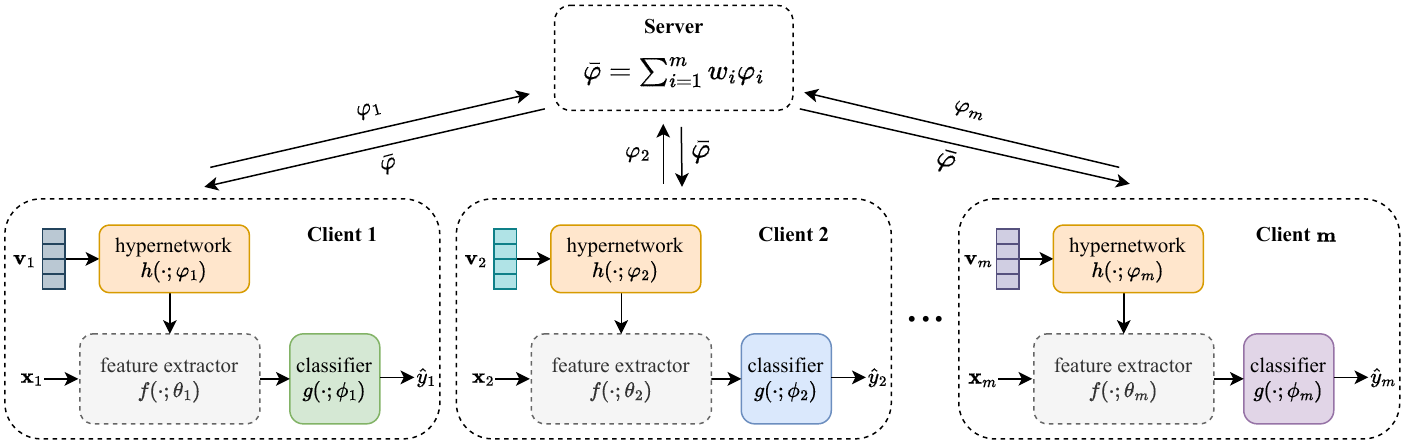}
   \caption{The proposed HyperFL framework. HyperFL decouples each client’s network into the former feature extractor $f(; \theta_i)$ and the latter classifier head $g(;{\phi _i})$. An auxiliary hypernetwork $h(;{\varphi _i})$ is introduced to generate local clients’ feature extractor $f(; \theta_i)$  using the client’s private embedding vector $\mathbf{v}_i$, i.e.,  ${\theta _i} = h({{\bf{v}}_i};{\varphi _i})$. These generated parameters are then used to extract features from the input ${x}_i$, which are subsequently fed into the classifier to obtain the output $\hat{y}_i$, expressed as $\hat{y}_i = g( f({x}_i; \theta_i); \phi_i)$. Throughout the FL training, \textbf{only} the hypernetwork  $\varphi_i$ is shared, while all other components are kept private, thus effectively mitigating potential privacy leakage concerns.}
   
\label{fig:framework}
\vskip -0.2in
\end{figure*}

\paragraph{Gradient Inversion Attacks.} 
Gradient Inversion Attacks (GIA) \cite{fredrikson2015model, zhu2019deep} are adversarial attacks that exploit a machine learning model's gradients to infer sensitive information about the training data. It iteratively adjusts the input data based on gradients to approximate private attributes or training samples \cite{phong2017privacy, zhu2019deep, geiping2020inverting, yin2021see, luo2022effective, geng2023improved, kariyappa2023cocktail}. 

\paragraph{Privacy Protection in Federated Learning.} 
To protect the data privacy in FL, additional defense methods have been integrated into FL, such as
Secure Multi-party Computing (SMC) \cite{yao1982protocols} based methods \cite{bonawitz2017practical, mugunthan2019smpai, mou2021verifiable, xu2022non}, Homomorphic Encryption (HE) \cite{gentry2009fully} based methods \cite{zhang2020batchcrypt, zhang2020privacy, ma2022privacy, park2022privacy} and Differential Privacy (DP) \cite{dwork2006differential} based methods \cite{geyer2017differentially, mcmahan2018learning, yu2020salvaging, bietti2022personalization, shen2023share}. 
Apart from these defense methods with theoretical guarantees, there are other empirical yet effective defense strategies, such as gradient pruning/masking \cite{zhu2019deep, huang2021evaluating, li2022auditing}, noise addition \cite{zhu2019deep, wei2020framework, huang2021evaluating, li2022auditing}, Soteria \cite{sun2020provable}, PRECODE \cite{scheliga2022precode}, and FedKL \cite{ren2023gradient}. However, these methods always suffer from privacy-utility trade-off problems, as illustrated in their papers. In contrast to these approaches, with the help of hypernetworks \cite{david2017hypernetworks}, this work proposes a novel FL framework that effectively “breaks the direct connection” between the shared parameters and the local private data to defend against GIA while achieving a favorable privacy-utility trade-off.

\paragraph{Hypernetworks in Federated Learning.} Hypernetworks \cite{david2017hypernetworks} are deep neural networks that generate the weights for another network, known as the target network, based on varying inputs to the hypernetwork. Recently, there have been some works that incorporate hypernetworks into FL for learning personalized models \cite{shamsian2021personalized, carey2022robust, li2023fedtp, tashakori2023semipfl, lin2023federated}. All of these methods adopt the similar idea that a central hypernetwork model are trained on the server to generate a set of models, one model for each client, which aims to generate personalized model for each client. 
Since the hypernetwork and client embeddings are trained on the server, which makes the server possessing all the information about the local models, enabling the server to recover the original inputs by GIA (see Table \ref{tab:attack} and Figure \ref{fig:attack} in Appendix).
In contrast to existing approaches, this work presents a Hypernetwork Federated Learning (HyperFL) framework, which prioritizes data privacy preservation over personalized model generation through the utilization of hypernetworks.

\noindent A more detailed discussion on related work is provided in Appendix.

\section{Method}
\label{sec:method}

In this section, we first formalize the FL problem, then we present our HyperFL framework.

\subsection{Problem Formulation}

In FL, suppose there are $m$ clients and a central server, where all clients communicate to the server to collaboratively train their models without sharing raw private data. Each client $i$ is equipped with its own data distribution $P_{XY}^{(i)}$ on $\mathcal{X} \times \mathcal{Y}$, where $\mathcal{X}$ is the input space and $\mathcal{Y}$ is the label space with $K$ categories in total. 
Let $\ell: \mathcal{X} \times \mathcal{Y} \rightarrow \mathbb{R}_+$ denotes the loss function given local model $\Theta_i$ and data point sampled from $P_{XY}^{(i)}$, then the underlying optimization goal of FL can be formalized as follows
{\small
\begin{equation} \label{eq:fl_obj_1}
    \arg \min _{\Theta} \frac{1}{m} \sum_{i=1}^m \mathbb{E}_{({x}, y) \sim P_{X Y}^{(i)}}\left[\ell\left(\Theta_i ; {x}, y\right)\right],
\end{equation}
}where $\Theta = \{\Theta_1, \Theta_2, \ldots, \Theta_m\}$ denotes the collection of all local models. In vanilla FL, all clients share the same parameters, i.e., $\Theta_1 = \Theta_2 = \cdots = \Theta_m$. In contrast, personalized FL allows for variation in the parameters across clients, enabling $\Theta_i$ to be different for each client.

Since the true underlying data distribution of each client is inaccessible, the common approach to achieving the objective (\ref{eq:fl_obj_1}) is through Empirical Risk Minimization (ERM). That is, assume each client has access to $n_i$ i.i.d. data points sampled from $P_{XY}^{(i)}$ denoted by $\mathcal{D}_i=\left\{\left({x}_i^l, y_i^l\right)\right\}_{l=1}^{n_i}$, whose corresponding empirical distribution is $\hat{P}_{XY}^{(i)}$, and we assume the empirical marginal distribution $\hat{P}_{XY}^{(i)}$ is identical to the true $P_{XY}^{(i)}$. Then the training objective is
{\small
\begin{equation} \label{eq:fl_obj_2}
    \arg \min _{\Theta} \frac{1}{m} \sum_{i=1}^m \mathcal{L}_i(\Theta_i),
\end{equation}
}where $\mathcal{L}_i(\Theta_i) = \frac{1}{n_i} \sum_{l=1}^{n_i} \ell(\Theta_i; {x}_i^l, y_i^l)$ is the local average loss over personal training data, e.g., empirical risk.

\subsection{Main Configuration HyperFL}

In the Main Configuration HyperFL framework, which is shown in Figure \ref{fig:framework}, each client $i$ has a classification network parameterized by $\Theta_i = \{\theta_i, \phi_i\}$ consists of a feature extractor $f: \mathcal{X} \rightarrow \mathbb{R}^d$ parameterized by $\theta_i$ , and a classifier $g: \mathbb{R}^d \rightarrow \mathbb{R}^K$ parameterized by $\phi_i$, where $d$ is the feature dimension and $K$ is the number of classes. Additionally, each client $i$ has a private client embedding $\mathbf{v}_i$ and a hypernetwork $h$ parameterized by $\varphi_i$, which is responsible for generating the parameters of the feature extractor $f$, i.e., $\theta_i = h(\mathbf{v}_i; \varphi_i)$. In this way, the hypernetwork can generate personalized feature extractor parameters for each client by taking the meaningful client embedding as input. The client embeddings can be trainable vectors or fixed vectors, depending on whether suitable client representations are known in advance. In this work, we adopt trainable vectors. Then, the objective (\ref{eq:fl_obj_2}) can be reformulated as
{\small
\begin{equation} \label{eq:fl_obj_ours}
    \arg \min _{\varphi, \phi, \mathbf{v}} \frac{1}{m} \sum_{i=1}^m \mathcal{L}_i(h(\mathbf{v}_i; \varphi_i), \phi_i),
\end{equation}
}where $\varphi = \{\varphi_1, \varphi_2, \ldots, \varphi_m\}$, $\phi = \{\phi_1, \phi_2, \ldots, \phi_m\}$, $\mathbf{v} = \{\mathbf{v}_1, \mathbf{v}_2, \ldots, \mathbf{v}_m\}$. Note that the feature extractor parameters are generated by the hypernetwork and not trainable, whereas the client embedding, the hypernetwork, and the classifier parameters are trainable.

Then, in order to “breaks the direct connection” between the shared parameters and the local private data to defend against GIA while maintaining competitive performance, each client only uploads the hypernetwork parameters to the server for aggregation while keeping the classifier and the private client embedding trained locally. As illustrated in Figure \ref{fig:framework}, in each FL communication round, each client $i$ uploads its hypernetwork parameters $\varphi_i$ to the server once the local training is completed while keeps the classifier parameters $\phi_i$ and client embedding $\mathbf{v}_i$ local to strengthen privacy protection. Then, the server aggregate these $\varphi_i$ to obtain the global $\bar{\varphi}$. Next, clients download $\bar{\varphi}$ to replace their corresponding local hypernetworks and start the next training iteration. This framework provides a natural way for sharing information across clients while maintaining the privacy of each client, by sharing the hypernetwork parameters. 
We will elaborate on this workflow in the following.

\paragraph{Local Training Procedure.} For local model training at each round, we first replace the local hypernetwork parameters $\varphi_i$ by the received aggregated hypernetwork parameter $\bar{\varphi}$. Then, we perform stochastic gradient decent steps to iteratively train the model parameters as follows:
\begin{itemize}
    \item \textbf{Step 1: Fix $\varphi_i$ and $\mathbf{v}_i$, update $\phi_i$}. Train the classifier parameters $\phi_i$ by gradient descent for one epoch:
    {\small
    \begin{equation} \label{eq:update_classifier}
    \phi_i \leftarrow \phi_i - \eta_g \nabla_{\phi_i} \ell\left(h(\mathbf{v_i}; \varphi_i), \phi_i ; \xi_i\right),
    \end{equation}
    }where $\xi_i$ denotes the mini-batch of data, $\eta_g$ is the learning rate for updating the classifier parameters.
    
    \item \textbf{Step 2: Fix new $\phi_i$, update $\varphi_i$ and $\mathbf{v}_i$}. After getting new classifier, we proceed to update the hypernetwork parameters $\varphi_i$ and client embedding $\mathbf{v}_i$ for multiple epochs:
    {\small
    \begin{align} \label{eq:update_hypernetwork}
    &\varphi_i \leftarrow \varphi_i - \eta_h \nabla_{\varphi_i} \ell\left(h(\mathbf{v_i}; \varphi_i), \phi_i ; \xi_i\right) \nonumber \\
    &\mathbf{v}_i \leftarrow \mathbf{v}_i - \eta_v \nabla_{\mathbf{v}_i} \ell\left(h(\mathbf{v_i}; \varphi_i), \phi_i ; \xi_i\right),
    \end{align}
    }where $\eta_h$ is the learning rate for updating the hypernetwork parameters and $\eta_v$ is the learning rate for updating the client embedding.
\end{itemize}

\paragraph{Global Aggregation.} Similar to common FL algorithms, the server performs weighted averaging of the hypernetwork parameters as
{\small
\begin{equation} \label{eq:server_agg}
\bar{\varphi} = \sum_{i=1}^m w_i \varphi_i,
\end{equation}
}where $w_i$ is the aggregation weight for client $i$, usually determined by the local data size, i.e., $w_i = \frac{n_i}{\sum_{i=1}^m n_i}$.

\subsection{HyperFL-LPM}

\begin{figure}[h]
  \centering
   \includegraphics[width=0.8\linewidth]{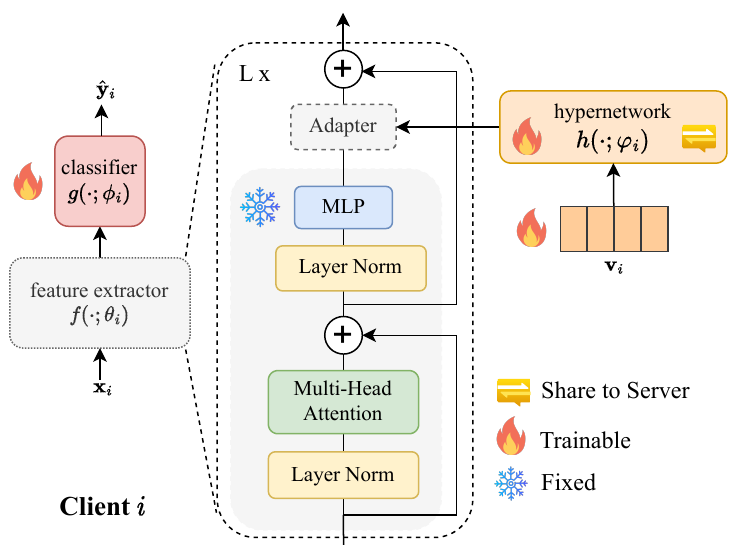}
   \caption{The proposed HyperFL-LPM framework within each client. In this framework, the weights of the pre-trained model are fixed, while only the classifier, hypernetwork, and client embedding are trainable. Note that $\theta$ here represents the parameters of the adapters.}
\label{fig:framework_v2}
\vskip -0.1in
\end{figure}


When confronted with a large feature extractor, using a hypernetwork to generate the parameters can be challenging. However, in scenarios where the feature extractor is significantly sizable, there are often numerous pre-trained models that are readily available \cite{he2016deep, vaswani2017attention, dosovitskiy2021image, liu2021swin, he2022masked}. Consequently, we can leverage these pre-trained models and employ parameter-effective fine-tuning techniques to adapt and fine-tune them \cite{houlsby2019parameter, hu2022lora, jia2022visual, guo2024selective}. To this end, we extend our HyperFL framework to address this situation and propose HyperFL for Large Pre-trained Models (HyperFL-LPM). 
The difference between HyperFL-LPM and Main Configuration HyperFL is the model adopted withen each client. In the HyperFL-LPM famework, instead of using the hypernetwork to generate the entire feature extractor parameters, we employ it to generate the adapter parameters to fine-tune the pre-trained models. Taking the Transformer-based pre-trained models as an example, the framework within each client is shown in Figure \ref{fig:framework_v2}.
By adopting this approach, our framework can utilize large pre-trained models as fixed feature extractors to handle complex tasks.

\section{Analysis and Insights}
\label{sec:theo_ana}

\subsection{Privacy Protection Analysis} \label{sec:any_privacy}

In this section, we present a comprehensive privacy analysis of our HyperFL framework. We consider the most common and widely adopted setting, where the server is an \textit{honest-but-curious} adversary, which obeys the training protocol but attempts to obtain the private data of clients according to model weights and updates \cite{liu2022privacy,li2023temporal}. 

\paragraph{Background: Gradient Inversion Attacks.} 
Given a neural network with parameters $\Theta$ and the gradients $\nabla_{\Theta} \mathcal{L}_{\Theta} (x^*, y^*)$ computed with a private data batch $(x^*, y^*)$, GIA tries to recover $x$, an approximation of $x^*$ as:
{\small
\begin{equation} \label{eq:attack}
    \arg \min _x \mathcal{L}_{\text {grad }}\left(x ; \Theta, \nabla_\Theta \mathcal{L}_\Theta\left(x^*, y^*\right)\right) + \alpha \mathcal{R}_{\text {aux }}(x),
\end{equation}
}where $\mathcal{L}_{\text {grad }}\left(x ; \Theta, \nabla_\Theta \mathcal{L}_\Theta\left(x^*, y^*\right)\right)$ is a gradient loss term used to enforce matching the gradients of recovered batch $x$ with the provided gradients $\mathcal{L}_\Theta(x^*, y^*)$, $\mathcal{R}_{\text {aux }}(x)$ is a regularization term utilized to regularize the recovered image based on image priors, and $\alpha$ is a regularization coefficient. The differences between previous works lie in the choice of gradient loss terms and regularization terms \cite{zhu2019deep, geiping2020inverting, yin2021see, luo2022effective, geng2023improved, kariyappa2023cocktail}. For example, Zhu et al. \cite{zhu2019deep} use $\ell_2$-distance as $\mathcal{L}_{\text {grad }}$ but do not use a regularization term $\mathcal{R}_{\text {aux }}$. Geiping et al. \cite{geiping2020inverting} adopt cosine similarity as $\mathcal{L}_{\text {grad }}$ and the total variance as $\mathcal{R}_{\text {aux }}$. Luo et al. \cite{luo2022effective} utilize cosine similarity as $\mathcal{L}_{\text {grad }}$ and apply two types of regularization within $\mathcal{R}_{\text {aux }}$: one gradient regularization for fully connected layer and another total variation regularization for convolution features.  Geng et al. \cite{geng2023improved} adopt $\ell_2$-distance as $\mathcal{L}_{\text {grad }}$ and divide $\mathcal{R}_{\text {aux }}$ into three terms, total variation on the input $x$, clip and scale operation on the input $x$.

\paragraph{Analysis on Proposed Leakage Defense.} \label{sec:why_work} 
Unlike previous FL models where the entire model parameters are uploaded to the server for aggregation, the HyperFL framework only requires each client to upload the hypernetwork parameters to the server. Therefore, the objective function of an attacker tries to recover $x$ from the HyperFL framework should be changed as:
{\small
\begin{equation} \label{eq:att_ours}
    \arg \min _{x} \mathcal{L}_{\text {grad }}\left(x; \varphi, \nabla_\varphi \mathcal{L}_{\varphi, \phi}\left(x^*, y^*, \mathbf{v}^*\right)\right) + \alpha \mathcal{R}_{\text {aux }}(x),
\end{equation}
}where $\varphi$ denotes the hypernetwork parameters, $\phi$ is the parameters of the classifier, $\mathbf{v}^*$ is the client embedding. Note that there is a major difference between objective ($\ref{eq:att_ours}$) and objective ($\ref{eq:attack}$). In objective ($\ref{eq:attack}$), the server can obtain gradients of the entire model, while in objective ($\ref{eq:att_ours}$), it can only access the gradients of the hypernetwork. 

Then, given that $x$ is only exposed to the feature extractor, obtaining the information of the feature extractor is essential to recover $x$. However, in HyperFL, the parameters of the feature extractor are obtained by inputting the client embedding into the hypernetwork. Therefore, it is necessary to first recover the client embedding. To achieve this pipeline, the objective (\ref{eq:att_ours}) can be reformulated as a bi-level optimization problem:
{\small
\begin{equation} \label{eq:att_ours_bilevel}
\begin{aligned}
&\arg \min _{x} \mathcal{L}_{\text {grad }}\left(x, \hat{\mathbf{v}}; \varphi, \Delta_\theta\right) + \alpha \mathcal{R}_{\text {aux }}(x)  \\
\mathrm{s.t.}~~&\hat{\mathbf{v}} = \arg \min _{\mathbf{v}} \mathcal{L}_{\text {grad }}\left(x, \mathbf{v}; \varphi, \nabla_\varphi \mathcal{L}_{\varphi, \phi}\left(x^*, y^*, \mathbf{v}^*\right)\right),
\end{aligned}
\end{equation}
}where $\mathbf{v}$ is an approximation of $\mathbf{v}^*$, $\theta$ is the parameters of the feature extractor that generated by the hypernertwork $h$, i.e., $\theta = h(\mathbf{v}; \varphi)$, $\Delta_\theta = \theta_t - \theta_{t-1}$ serves as an approximation for the gradient of the feature extractor $\theta$
\cite{zhang2019lookahead}. However, a challenge arises when solving the lower-level subproblem in objective (\ref{eq:att_ours_bilevel}). According to the chain rule, $\nabla_\varphi \mathcal{L} (x, y^*, \mathbf{v}) = \frac{\mathcal{L}(x, y^*, \mathbf{v})}{\partial \phi}\frac{\partial \phi}{\partial \varphi}$. To compute the gradients of the hypernetwork, it is necessary to calculate the gradients of the classifier first \footnote{We assume the label information can be obtained \cite{geiping2020inverting, zhao2020idlg, yin2021see, ma2023instance}.}. However, since the classifier is trained locally and not shared with the server, it is not feasible to compute these gradients. As a result, the gradients of the hypernetwork cannot be determined, making it challenging to recover the client embedding.

One may question whether we can eliminate the need for the classifier in the process of recovering the client embedding. Drawing inspiration from the GIA procedure \cite{zhu2019deep, li2023temporal}, we can replace the label information that needs to be optimized with the output of the hypernetwork in this context. By employing this approach, it's able to bypass the requirement of the classifier. Then, the lower-level subproblem in objective (\ref{eq:att_ours_bilevel}) will be reformulated as 
{\small
\begin{equation} \label{eq:recover_v}
    \arg \min _{\mathbf{v}, \theta} \mathcal{L}_{\text {grad }}\left(\theta, \mathbf{v}; \varphi, \nabla_\varphi \mathcal{L}_{\varphi, \phi}\left(x^*, y^*, \mathbf{v}^*\right)\right),
\end{equation}
}where $\theta$ is the outputs of the hypernetwork. 
However, previous works have shown that simulating the optimization of both the input and output is challenging \cite{zhu2019deep, zhao2020idlg, ma2023instance}. Therefore, researchers propose to first identify the output and then optimize the input 
\cite{zhao2020idlg, geiping2020inverting, zhu2021r, yin2021see, ma2023instance, wang2024towards}. 
Specifically, they can identify the label $y^*$ based on the relationship between the known gradient and label information, as $y^*$ is typically low-dimensional (i.e., a simple one-hot vector) \cite{zhao2020idlg, yin2021see, ma2023instance}. However, the output of our hypernetwork (i.e., $\theta$) is high-dimensional and complex. This complexity makes it challenging to identify the ground-truth output $\theta^*$ using the known gradient information, and consequently, recovering the embedding becomes difficult. Even if we attempt to optimize both the input and output simultaneously, solving Eq. (\ref{eq:recover_v}) remains challenging due to the large search space \cite{zhu2019deep, dang2021revealing, huang2021evaluating, kariyappa2023cocktail}. Thus, it's challenging to recover the client embedding. Furthermore, even if the client embedding can be recovered (albeit with significant error), the input $x$ is still difficult to recover due to the same problems (i.e., inability to infer the output first and a large search space) encountered when solving the upper-level subproblem in objective (\ref{eq:att_ours_bilevel}).

In summary, the HyperFL framework effectively safeguards data privacy, as the combination of the hypernetwork, locally trained classifier, and private client embedding renders the recovery of $x$ using GIA unattainable, which is also demonstrated in our experiments.

\subsection{Convergence Analysis} \label{sec:convergence_ana}
To facilitate the convergence analysis of HyperFL, we make the assumptions commonly encountered
in literature \cite{li2019convergence} to characterize the smooth and non-convex optimization landscape.

\begin{assumption} 
\label{assumption1}
$\mathcal{L}_1, \cdots, \mathcal{L}_m$\text{ are all L-smooth: for all} $(\phi_j,$
\noindent $\varphi_j,\mathbf{v}_j)$ \text{and} $(\phi_k,\varphi_k,\mathbf{v}_k)$, 
$\mathcal{L}_i(h(\mathbf{v}_k,\varphi_k),\phi_k) \leq \notag \mathcal{L}_i(h(\mathbf{v}_j,$
\noindent $\varphi_j),\phi_j) + ((\phi_k,\varphi_k,\mathbf{v}_k)-(\phi_j,\varphi_j,\mathbf{v}_j))\nabla \mathcal{L}_i(h(\mathbf{v}_j,\varphi_j),$\\\noindent $\phi_j)+\frac{L}{2}\|(\phi_k,\varphi_k,\mathbf{v}_k)-(\phi_j,\varphi_j,\mathbf{v}_j)\|_2^2
$.
\end{assumption}
\begin{assumption}
\label{assumption2}
Let $\xi_i^t$ be sampled from the $i$-th client's local data uniformly at random at $t$-th training step. The variance of stochastic gradients in each client for each variable is bounded: 
$\mathbb{E}\left\|\nabla_\phi L_i\left(h\left(\mathbf{v}_i^t, \varphi_i^t\right), \phi_i^t, \xi_i^t\right)-\nabla_\phi L_i\left(h\left(\mathbf{v}_i^t, \varphi_i^t\right), \phi_i^t\right)\right\|^2 \leq $
\noindent $\sigma_i^2,\mathbb{E}\left\|\nabla_\varphi L_i\left(h\left(\mathbf{v}_i^t, \varphi_i^t\right), \phi_i^{t+1}, \xi_i^t\right)-\nabla_\varphi L_i\left(h\left(\mathbf{v}_i^t, \varphi_i^t\right), \phi\right.\right.$
\noindent $\left.\left._i^{t+1}\right)\right\|^2 \leq \sigma_i^2$,
$\mathbb{E}\left\|\nabla_\mathbf{v} L_i\left(h\left(\mathbf{v}_i^t, \varphi_i^t\right), \phi_i^{t+1}, \xi_i^t\right)-\nabla_\mathbf{v} L_i\left(h\right.\right.$
\noindent $\left.\left.\left(\mathbf{v}_i^t, \varphi_i^t\right), \phi_i^{t+1}\right)\right\|^2 \leq \sigma_i^2$ 
\text{ for } $i=1, \cdots, m$.
\end{assumption}

\begin{assumption}
\label{assumption3}
The expected squared norm of stochastic gradients is uniformly bounded, i.e., 
$\mathbb{E}\left\|\nabla_\phi L_i\left(h\left(\mathbf{v}_i^t, \varphi_i^t\right),\right.\right.$

\noindent $\left.\left. \phi_i^t, \xi_i^t\right)\right\|^2 \leq G^2, \mathbb{E}\left\|\nabla_\varphi L_i\left(h\left(\mathbf{v}_i^t, \varphi_i^t\right), \phi_i^{t+1}, \xi_i^t\right)\right\|^2\leq G^2$, 
\noindent $\mathbb{E}\left\|\nabla_\mathbf{v} L_i\left(h\left(\mathbf{v}_i^t, \varphi_i^t\right), \phi_i^{t+1}, \xi_i^t\right)\right\|^2 \leq G^2\text{~for all~} i=1,\cdots,$
\noindent $m \text{ and } t=0, \cdots, T-1$. Here $T$ denotes the total number of every client's training steps.
\end{assumption}

Then we present the convergence rate for HyperFL.
\begin{theorem}
\label{theorem1}
Let Assumptions \ref{assumption1}, \ref{assumption2} and \ref{assumption3} hold and $L$, $M$, $\sigma_i$, $G$ be defined therein. Denote $\eta_{\min }=\min \left\{\eta_g, \frac{1}{2} \eta_h, \eta_v\right\}$ and $E$ as the number of local training iterations between two communication rounds. Then we have
{\small
\begin{equation}
\frac{1}{m T} \sum_{i=1}^m \sum_{t=1}^T \mathbb{E}\left[\left\|\nabla\mathcal{L}_i^t\right\|^2\right] \leq 2 \sqrt{\frac{L M G^2 D}{2 T}},
\end{equation}
}where $\mathcal{L}_i^0 - \mathcal{L}_i^* \leq D, \forall i$, and
$
\eta_g^2+\eta_v^2+(E-1) \eta_h^2+\frac{E-1}{L} \eta_h \leq M \eta_{min}^2.
$
\end{theorem}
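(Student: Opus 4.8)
The plan is to run the standard non-convex ``descent lemma plus telescoping'' argument, adapted to the block-coordinate structure of the local updates (first $\phi_i$, then jointly $\varphi_i$ and $\mathbf{v}_i$) and to the local-epoch-plus-aggregation schedule. The unifying object is the full local loss $\mathcal{L}_i$ viewed as a function of the concatenated variable $(\phi_i,\varphi_i,\mathbf{v}_i)$, to which Assumption~\ref{assumption1} applies blockwise: joint $L$-smoothness implies $L$-smoothness in each block. For every elementary update I would write the quadratic upper bound from $L$-smoothness, take expectation conditioned on the current iterate, use unbiasedness of the mini-batch gradient to turn the first-order term into $-\eta\,\|\nabla_\bullet\mathcal{L}_i\|^2$, and bound the second-order term $\tfrac{L}{2}\eta^2\,\mathbb{E}\|g_\bullet\|^2$ by $\tfrac{L}{2}\eta^2 G^2$ via Assumption~\ref{assumption3}; Assumption~\ref{assumption2} is kept to absorb the cross terms generated by iterate drift.

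First I would treat the single $\phi_i$-step with rate $\eta_g$ and the $\mathbf{v}_i$-step with rate $\eta_v$: each yields an expected decrease of at least $\eta\,\|\nabla_\bullet\mathcal{L}_i^t\|^2-\tfrac{L}{2}\eta^2 G^2$. The $\varphi_i$-block is updated over $E-1$ inner epochs with rate $\eta_h$; here I would chain the per-epoch descent inequalities, and the key simplification is that choosing $\eta_h\le 1/L$ gives $\eta_h-\tfrac{L}{2}\eta_h^2\ge \tfrac12\eta_h$, so the effective descent coefficient for this block is $\tfrac12\eta_h$ --- precisely why $\tfrac12\eta_h$ appears inside $\eta_{\min}$. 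Summing the three blockwise contributions shows that one full local round decreases $\mathcal{L}_i$ in expectation by at least $\eta_{\min}$ times the full-gradient norms accumulated over the round, minus a second-order/drift error. Tracking the drift of $\varphi_i$ across the $E-1$ inner epochs (bounding $\|\varphi_i^{\tau}-\varphi_i^{\tau'}\|$ by $\eta_h G$ times the epoch gap and applying smoothness) is what produces the precise combination $\eta_g^2+\eta_v^2+(E-1)\eta_h^2+\frac{E-1}{L}\eta_h$; the hypothesis then bounds this error by $M\eta_{\min}^2$.

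Next I would telescope the round-wise descent over the training horizon and sum over the $m$ clients, using the summation to relate the blockwise decreases to the full-gradient sum $\sum_t\|\nabla\mathcal{L}_i^t\|^2$. Rearranging and dividing by $\eta_{\min}T$ targets an intermediate bound of the form $\frac{1}{mT}\sum_{i}\sum_{t}\mathbb{E}\|\nabla\mathcal{L}_i^t\|^2 \le \frac{D}{\eta_{\min}T}+\frac{LMG^2}{2}\eta_{\min}$, where the first term uses $\mathcal{L}_i^0-\mathcal{L}_i^*\le D$ and the second collects the noise through the step-size condition. Finally, applying the elementary inequality $a/\eta+b\eta\ge 2\sqrt{ab}$ (equivalently, balancing at $\eta_{\min}=\sqrt{2D/(LMG^2T)}$) yields the claimed rate $2\sqrt{LMG^2D/(2T)}$.

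I expect the main obstacle to be the simultaneous interplay of (i) the alternating block structure, (ii) the $E-1$ inner epochs with client drift, and (iii) the server averaging of $\varphi_i$. The averaging replaces $\varphi_i$ by $\bar\varphi$ and can raise the local loss, so the per-round telescoping is not automatic; controlling the increase $\mathcal{L}_i(\bar\varphi,\cdot)-\mathcal{L}_i(\varphi_i,\cdot)$ through $L$-smoothness together with the bounded-gradient bound on $\|\bar\varphi-\varphi_i\|$ is the delicate point. Equally fiddly is the bookkeeping that collapses the drift error to exactly $\eta_g^2+\eta_v^2+(E-1)\eta_h^2+\frac{E-1}{L}\eta_h$ rather than a looser expression, since it is this exact form that the step-size condition is tailored to absorb into $M\eta_{\min}^2$.
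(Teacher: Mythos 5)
Your plan is essentially the paper's proof: a blockwise descent lemma for each sub-step (the $\phi_i$ update, then the joint $\varphi_i,\mathbf{v}_i$ update, then the server averaging of $\varphi_i$), with the second-order terms bounded by $\tfrac{L}{2}\eta^2G^2$ via Assumption~\ref{assumption3}, the averaging perturbation $\|\bar\varphi-\varphi_i^{t+1}\|$ controlled by $\eta_h(E-1)G$ through the cumulative-update formula, telescoping, and balancing $\eta_{\min}=\sqrt{2D/(LMG^2T)}$.

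One localized correction: the factor $\tfrac12\eta_h$ in $\eta_{\min}$ does \emph{not} come from a condition $\eta_h\le 1/L$ turning $\eta_h-\tfrac{L}{2}\eta_h^2$ into $\tfrac12\eta_h$ --- no such condition appears in the theorem, and relying on it would add an unstated hypothesis. In the paper it arises from exactly the ``delicate point'' you flag: the inner product $\langle x_i^{t+1,2}-x_i^{t+1,1},\bar g_{i,\varphi}^t\rangle$ generated by the averaging step is split by Cauchy--Schwarz/AM--GM as $\tfrac{1}{2\eta_h}\|x_i^{t+1,2}-x_i^{t+1,1}\|^2+\tfrac{\eta_h}{2}\|\bar g_{i,\varphi}^t\|^2$; the first piece yields the $\tfrac{E-1}{L}\eta_h$ error term and the second eats half of the $-\eta_h\|\bar g_{i,\varphi}^t\|^2$ descent, which is precisely why $\tfrac12\eta_h$ enters $\eta_{\min}$. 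With that substitution your argument reproduces the exact error combination $\eta_g^2+\eta_v^2+(E-1)\eta_h^2+\tfrac{E-1}{L}\eta_h$; note also that, as in the paper, Assumption~\ref{assumption2} ends up unused --- only smoothness and the bounded-gradient assumption are needed.
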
 
According to Theorem \ref{theorem1}, we can obtain an $O(\frac{1}{\sqrt{T}})$ convergence rate towards the stationary solution under smooth and non-convex conditions. This convergence rate is comparable to that of FedAvg in the non-convex scenario \cite{yu2019parallel}. Furthermore, we can expedite the convergence for Polyak-Lojasiewicz (PL) functions \cite{karimi2016linear}, which are commonly encountered in non-convex optimization scenarios.
\begin{assumption}
\label{pl_assumption}
A function $f$ is $\mu$-PL function if for some $\mu > 0$, it satisfies
{\small
$$
\|\nabla f(x)\|^2 \geq 2\mu\left(f(x) - \inf_{x^{\prime}}f(x^{\prime})\right), \quad \forall x.
$$
}We assume all $\mathcal{L}_1, \cdots, \mathcal{L}_m$ are $\mu$-PL functions, and simply denote $\inf_{x^{\prime}}f(x^{\prime})$ by $f^*$.
\end{assumption}

\begin{corollary}
\label{corollary1}
With assumptions as well as $\eta_{min}$, $L$, $M$ and $D$ defined in Theorem \ref{theorem1} and extra Assumption \ref{pl_assumption}, we have
{\small
\begin{equation}
\begin{aligned}
& \mathbb{E}\left[\frac{1}{m} \sum_{i=1}^m \mathcal{L}_i\left(h\left(\mathbf{v}_i^t ; \varphi_i^t\right), \phi_i^t\right)\right]-\mathcal{L}^* \\ \leq &\left(1-2\eta_{\min } \mu\right)^{t+1} D+\eta_{\min } \frac{L M G^2}{4 \mu}.
\end{aligned}
\end{equation}
}If we set $\eta_{\min } \leq \frac{\mu \epsilon}{L M G^2}$, after $O\left(\frac{1}{\epsilon} \log \left(\frac{1}{\epsilon}\right)\right)$ steps, we have that
{\small
\begin{equation}
\mathbb{E}\left[\frac{1}{m} \sum_{i=1}^m \mathcal{L}_i\left(h\left(\mathbf{v}_i^t ; \varphi_i^t\right), \phi_i^t\right)\right]-\mathcal{L}^* \leq \epsilon.
\end{equation}
}
\end{corollary}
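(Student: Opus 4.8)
The plan is to reduce the corollary to a single one-step descent inequality and then use the PL property to turn it into a linearly contracting recursion. The natural starting point is the per-step bound that already underlies the proof of Theorem~\ref{theorem1}: under Assumptions~\ref{assumption1}--\ref{assumption3} and the step-size condition $\eta_g^2+\eta_v^2+(E-1)\eta_h^2+\frac{E-1}{L}\eta_h\le M\eta_{\min}^2$, the averaged objective $\mathcal{L}^t:=\frac{1}{m}\sum_{i=1}^m\mathcal{L}_i(h(\mathbf{v}_i^t;\varphi_i^t),\phi_i^t)$ obeys
\[
\mathbb{E}[\mathcal{L}^{t+1}]\le\mathbb{E}[\mathcal{L}^t]-\eta_{\min}\,\mathbb{E}\!\left[\tfrac{1}{m}\textstyle\sum_{i=1}^m\|\nabla\mathcal{L}_i^t\|^2\right]+\frac{LMG^2}{2}\,\eta_{\min}^2 .
\]
Summing this over $t$ and tuning $\eta_{\min}$ is precisely what yields the $O(1/\sqrt{T})$ rate of Theorem~\ref{theorem1}, so I take this inequality as granted. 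The entire corollary then hinges on replacing the gradient-norm term by a suboptimality gap.

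Next I would invoke Assumption~\ref{pl_assumption} client-wise, $\|\nabla\mathcal{L}_i^t\|^2\ge 2\mu(\mathcal{L}_i^t-\mathcal{L}_i^*)$ for each $i$, and average over clients. Writing $\mathcal{L}^*:=\frac{1}{m}\sum_{i=1}^m\mathcal{L}_i^*$ gives $\frac{1}{m}\sum_i\|\nabla\mathcal{L}_i^t\|^2\ge 2\mu(\mathcal{L}^t-\mathcal{L}^*)$. Substituting into the descent inequality and subtracting $\mathcal{L}^*$ from both sides produces the contraction
\[
\mathbb{E}[\mathcal{L}^{t+1}-\mathcal{L}^*]\le(1-2\eta_{\min}\mu)\,\mathbb{E}[\mathcal{L}^t-\mathcal{L}^*]+\frac{LMG^2}{2}\,\eta_{\min}^2 .
\]

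I would then unroll this linear recursion. With $\rho=1-2\eta_{\min}\mu$ and additive term $b=\frac{LMG^2}{2}\eta_{\min}^2$, telescoping and summing the geometric series gives $\mathbb{E}[\mathcal{L}^{t+1}-\mathcal{L}^*]\le\rho^{t+1}(\mathcal{L}^0-\mathcal{L}^*)+\frac{b}{1-\rho}\le\rho^{t+1}D+\eta_{\min}\frac{LMG^2}{4\mu}$, where I used $\mathcal{L}^0-\mathcal{L}^*\le D$ and $\frac{b}{1-\rho}=\frac{LMG^2\eta_{\min}^2/2}{2\eta_{\min}\mu}$; this is exactly the first displayed bound of the corollary. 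For the complexity claim I would set $\eta_{\min}\le\frac{\mu\epsilon}{LMG^2}$, which bounds the constant term by $\eta_{\min}\frac{LMG^2}{4\mu}\le\frac{\epsilon}{4}$, and then apply $1-x\le e^{-x}$ to get $\rho^{t+1}\le e^{-2\eta_{\min}\mu(t+1)}$; forcing the transient below $\frac{3\epsilon}{4}$ requires $t+1\ge\frac{1}{2\eta_{\min}\mu}\log\frac{4D}{3\epsilon}$, i.e.\ $O\!\left(\frac{1}{\epsilon}\log\frac{1}{\epsilon}\right)$ steps, after which the total gap is at most $\epsilon$.

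I expect no serious obstacle once the one-step descent inequality of Theorem~\ref{theorem1} is in hand, since this is the standard PL-to-linear-rate reduction. The only points demanding care are the bookkeeping in the second paragraph---confirming that the client-wise PL inequalities aggregate into a PL-type inequality for the averaged objective with the correct reference value $\mathcal{L}^*=\frac{1}{m}\sum_i\mathcal{L}_i^*$, which is legitimate because $\phi_i$ and $\mathbf{v}_i$ are personalized and the objective is separable across clients---and keeping $\eta_{\min}$ small enough that $2\eta_{\min}\mu\le 1$, so that $\rho\in[0,1)$ is a genuine contraction factor (automatically satisfied by the later choice of $\eta_{\min}$ for small $\epsilon$).
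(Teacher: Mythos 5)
Your proposal is correct and follows essentially the same route as the paper: take the one-step descent inequality established in the proof of Theorem~\ref{theorem1}, convert the gradient-norm term into a suboptimality gap via the PL condition, unroll the resulting linear recursion and bound the geometric series by $\eta_{\min}\frac{LMG^2}{4\mu}$, then choose $\eta_{\min}\le\frac{\mu\epsilon}{LMG^2}$ to obtain the $O\big(\frac{1}{\epsilon}\log\frac{1}{\epsilon}\big)$ iteration count. The only (immaterial) difference is that you average over clients before unrolling the recursion whereas the paper unrolls per client and averages at the end; your explicit handling of $\mathcal{L}^*=\frac{1}{m}\sum_i\mathcal{L}_i^*$ and of the condition $2\eta_{\min}\mu\le 1$ is if anything slightly more careful than the paper's.
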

When employing PL functions, the convergence rate of HyperFL is faster than that achieved solely through smoothness assumptions. 

\section{Experiments}
\label{sec:exp}


\subsection{Experimental Setup} \label{sec:exp_setup}

\textbf{Datasets.} For the Main Configuration HyperFL, we evaluate our method on four widely-used image classification datasets: (1) EMNIST \cite{cohen2017emnist}; (2) Fashion-MNIST \cite{xiao2017fashion}; (3) CIFAR-10 \cite{krizhevsky2009learning}; and (4) CINIC-10 \cite{darlow2018cinic}.
For the HyperFL-LPM, we evaluate our method on the EMNIST \cite{cohen2017emnist} and CIFAR-10 \cite{krizhevsky2009learning} datasets.

\textbf{Model Architectures.} For the Main Configuration HyperFL, simlar to \cite{xu2023personalized}, we adopt two different CNN target models for EMNIST/Fashion-MNIST and CIFAR-10/CINIC-10, respectively.
For the HyperFL-LPM, we adopt the ViT \cite{dosovitskiy2021image} and ResNet \cite{he2016deep} pre-trained on the ImageNet dataset \cite{deng2009imagenet} as the feature extractor. 
The hypernetworks of HyperFL and HyperFL-LPM both are a fully-connected neural network with one hidden layer, multiple linear heads per target weight tensor. The client embeddings are learnable vectors with dimension equals 64.

\textbf{Compared Methods.}  For the Main Configuration HyperFL, we compare the proposed method with the following approaches: (1) Local-only; (2) FedAvg \cite{mcmahan2017communication}; (3) pFedHN \cite{shamsian2021personalized}; and some DP-based FL methods, including (4) DP-FedAvg \cite{mcmahan2018learning}; (5) PPSGD \cite{bietti2022personalization}; and (6) CENTAUR \cite{shen2023share}. 
For the HyperFL-LPM, we compare our method with (1) Local-only with fixed feature extractor; (2) Local-only with adapter fine-tuning; (3) FedAvg with fixed feature extractor; and (4) FedAvg with adapter fine-tuning.

\textbf{Training Settings.} 
We employ the mini-batch SGD \cite{ruder2016overview}  as a local optimizer for all approaches, and the number of local training epochs is set to 5. The number of global communication rounds is set to 200 for all datasets. Average test accuracy of all local models is reported for performance evaluation.  


For privacy evaluation, 
we adopt the widely used IG \cite{geiping2020inverting}, state-of-the-art ROG \cite{yue2023gradient}, and a tailored attack method for our defense framework to recover the input images. More details about experimental setup are provided in Appendix. 

\subsection{Experimental Results}
\label{sec:exp_result}

\paragraph{Performance Evaluation.}

As demonstrated in Table \ref{tab:main_result}, the performance of all the compared DP-based FL methods is inferior to FedAvg and Local-only. 
This is due to the incorporation of DP mechanisms, which adversely affect model usability and result in decreased performance. In contrast, our proposed HyperFL consistently surpasses these methods across various datasets, demonstrating its outstanding utility.  
Notably, HyperFL excels in both situations where Local-only outperforms (i.e., Fashion-MNIST and CINIC-10) and where FedAvg prevails (i.e., EMNIST and CIFAR-10).
This further highlights HyperFL's adaptability, excelling in both centralized FL scenarios and cases requiring personalization. Furthermore, the learned client embeddings, which are meaningful, can be found in the Appendix.
Although pFedHN outperforms our method in two scenarios, it exhibits poor defense capability against GIA, as illustrated in Table \ref{tab:attack}.

\begin{table}[h]
\centering
{\fontsize{24}{28}\selectfont
\resizebox{\linewidth}{!}{
\begin{tabular}{lcccccccc}
\toprule 
 \multirow{2}{*}{Method} & \multicolumn{2}{c}{EMNIST} & \multicolumn{2}{c}{Fashion-MNIST} & \multicolumn{2}{c}{CIFAR-10} & \multicolumn{2}{c}{CINIC-10} \\
 \cmidrule(lr){2-3} \cmidrule(lr){4-5} \cmidrule(lr){6-7} \cmidrule(lr){8-9}
 & 20 clients & 100 clients & 20 clients & 100 clients & 20 clients & 100 clients & 20 clients & 100 clients \\

\midrule
Local-only & 73.41 & 75.68 & 85.93 & 87.01 & 65.47 & 66.11 & 63.60 & 64.84 \\
FedAvg & 72.77 & 78.87 & 85.67 & 88.11 & 70.02 & 76.24 & 57.00 & 59.11 \\
pFedHN & \textbf{80.86} & 77.37 & 87.64 & 89.80 & 70.18 & \textbf{80.07} & 63.88 & 70.36 \\
\midrule
DP-FedAvg & 35.12 & 45.73 & 59.88 & 68.29 & 29.12 & 32.03 & 27.30 & 29.94 \\

CENTAUR & 68.82	& 67.24	& 83.07	& 79.77 & 50.85 & 51.86& 48.82 & 51.01 \\

PPSGD & 71.16 & 71.18 & 84.47 &  82.94 & 52.17 & 53.92 & 49.98 & 52.91 \\

\midrule
HyperFL & {76.29} & \textbf{80.22} & \textbf{88.28} & \textbf{90.41} & \textbf{73.03} & {78.73} & \textbf{66.74} & \textbf{72.21} \\

\bottomrule
\end{tabular}
}
}
\vskip -0.1in
\caption{The comparison of final test accuracy (\%) of different methods on various datasets. We apply full participation for FL system with 20 clients, and apply client sampling with rate 0.3 for FL system with 100 clients.}
\label{tab:main_result}
\vskip -0.15in
\end{table}

The performance of HyperFL-LPM compared with Local-only and FedAvg is shown in Table \ref{tab:result_2}. From this table, we can see that HyperFL-LPM can achieve comparable performance to baseline adapter fine-tuning methods with different pre-trained models, regardless of whether Local-only or FedAvg performs better. \pengxin{Further results for FedAvg with full parameter tuning (FPT) using ViT on EMNIST and CIFAR-10 are 78.46 and 97.78, respectively. It shows HyperFL-LPM is also comparable to FPT.} This highlights the effectiveness of HyperFL-LPM.

\begin{table}[h]
\centering
\resizebox{\linewidth}{!}{
\begin{tabular}{lcccccc}
\toprule 
& Arch & Local-only$^{\dagger}$ & Local-only$^{\dagger\dagger}$ & FedAvg$^{\dagger}$ & FedAvg$^{\dagger\dagger}$ & HyperFL-LPM \\

\midrule
\multirow{2}{*}{EMNIST} & ResNet & 72.83 & 80.35 & 68.99 & 75.21 & 80.32 \\
& ViT & 76.95 & 80.04 & 70.92 & 76.42 & 79.92  \\
\midrule
\multirow{2}{*}{CIFAR-10} & ResNet & 68.57 & 73.57 & 62.35 & 75.57 & 75.03  \\
& ViT & 91.82 & 89.70 & 92.32 & 95.56 & 95.40  \\
\bottomrule
\end{tabular}
}
\vskip -0.1in
\caption{The comparison of final test accuracy (\%) of different methods on various datasets with 20 clients. $^{\dagger}$ Fixed feature extractor. $^{\dagger\dagger}$ Adapter fine-tuning.}
\vskip -0.2in
\label{tab:result_2}
\end{table}


\paragraph{Privacy Evaluation.} 
The reconstructed results of IG \cite{geiping2020inverting} are provided in Table \ref{tab:attack}, while more results of ROG \cite{yue2023gradient} and tailored attack method are provided in Table \ref{tab:attack_rog} and Figures \ref{fig:attack_rog} and \ref{fig:attack_tailored} in Appendix. From Table \ref{tab:attack}, we can observe that the native FedAvg, pFedHN, \pengxin{and pFedHN-PC} methods have a much higher risk of leaking data information (indicated by the higher PSNR and SSIM values and lower LPIPS value). This can also be seen in the reconstructed images, which closely resemble the original ones, as illustrated in Figure \ref{fig:attack} in Appendix. Although introducing DP improves data privacy, there is a significant drop in model performance, as shown in Table \ref{tab:main_result}. In contrast, HyperFL achieves a similar level of privacy protection while outperforming all DP-based methods and the native FedAvg in terms of model accuracy.

\begin{table} [h]
        \centering
        \resizebox{0.9\linewidth}{!}{
          \begin{tabular}{l ccc ccc}
            \toprule
            & \multicolumn{3}{c}{EMNIST} & \multicolumn{3}{c}{CIFAR-10} \\
            \cmidrule(lr){2-4} \cmidrule(lr){5-7} 
            Method & PSNR  & SSIM  & LPIPS  & PSNR  & SSIM  & LPIPS  \\
            \midrule
            FedAvg & 32.64 & 0.8925 & 0.0526 & 16.16 & 0.6415 & 0.0536  \\
            pFedHN & 31.24 & 0.8701 & 0.0807 & 16.02 & 0.6351 & 0.0504 \\
            \pengxin{pFedHN-PC} & \pengxin{28.38} & \pengxin{0.8713} & \pengxin{0.0645} & \pengxin{15.80} & \pengxin{0.6247} & \pengxin{0.4407} \\
            \midrule
            DP-FedAvg & 7.74 & 0.2978 & 0.7051 & 7.90 & 0.2716 & 0.3204  \\
            CENTAUR & 9.52 & 0.2136 & 0.6712 & 9.80 & 0.2723 & 0.2882  \\
            PPSGD & 9.73 & 0.1889 & 0.6466 & 9.70 & 0.2788 & 0.2643 \\
            \midrule
            HyperFL & 7.85 & 0.3010 & 0.7147 & 8.35 & 0.2732 & 0.3132  \\
            \bottomrule
          \end{tabular}
        }
\vskip -0.1in
    \caption{Reconstruction results of IG.}
\label{tab:attack}
\vskip -0.2in
\end{table}
\paragraph{Training Efficiency.}

To validate the training efficiency of the proposed HyperFL framework, we compare the training time of HyperFL with other DP-based FL methods in Table \ref{tab:training_time}. This table clearly shows the efficiency of the proposed HyperFL framework. Specifically, from this table we can see that the proposed HyperFL framework runs faster than all the compared DP-based FL methods and only slightly slower than the FedAvg method. This is because DP-based FL methods often incur additional computation cost due to their privacy-preserving mechanisms, whereas HyperFL achieves faster training by leveraging the advantages of hypernetworks, all while ensuring data privacy.

\begin{table}[h] 
    \centering
    \resizebox{\linewidth}{!}{
    \begin{tabular}{|c|c|c|c|c|c|}
        \hline
         & FedAvg & DP-FedAvg & PPSGD & CENTAUR & HyperFL \\
        \hline
        \# Time (s) & 23 & 194 & 223 & 210 & 37 \\
        \hline
    \end{tabular}
    }
\vskip -0.1in
\caption{Training time of per training round on the EMNIST dataset with 20 clients of different methods.}
\vskip -0.2in
\label{tab:training_time}
\end{table}


\paragraph{Convergence Evaluation.}


To validate to convergence of the proposed HyperFL framework, we draw the training loss of FedAvg and HyperFL in Figure \ref{fig:train_loss} and the trend of feature extractor parameters' variation in Figure \ref{fig:f_convergence}. 
From Figure \ref{fig:train_loss}, we can observe that HyperFL almost has the same convergence rate as FedAvg, which demonstrates the convergence property of HyperFL. Moreover, after convergence, the training loss of HyperFL is lower than that of FedAvg, which reflects why HyperFL performs better than FedAvg. Furthermore, in the later stages of the training process, the variation of the feature extractor parameters approaches zero, as depicted in Figure \ref{fig:f_convergence}. This further confirms the convergence property of HyperFL.

\begin{figure}[h]
\vskip -0.2in
  \centering
  \subfigure[]{\includegraphics[width=0.47\linewidth]{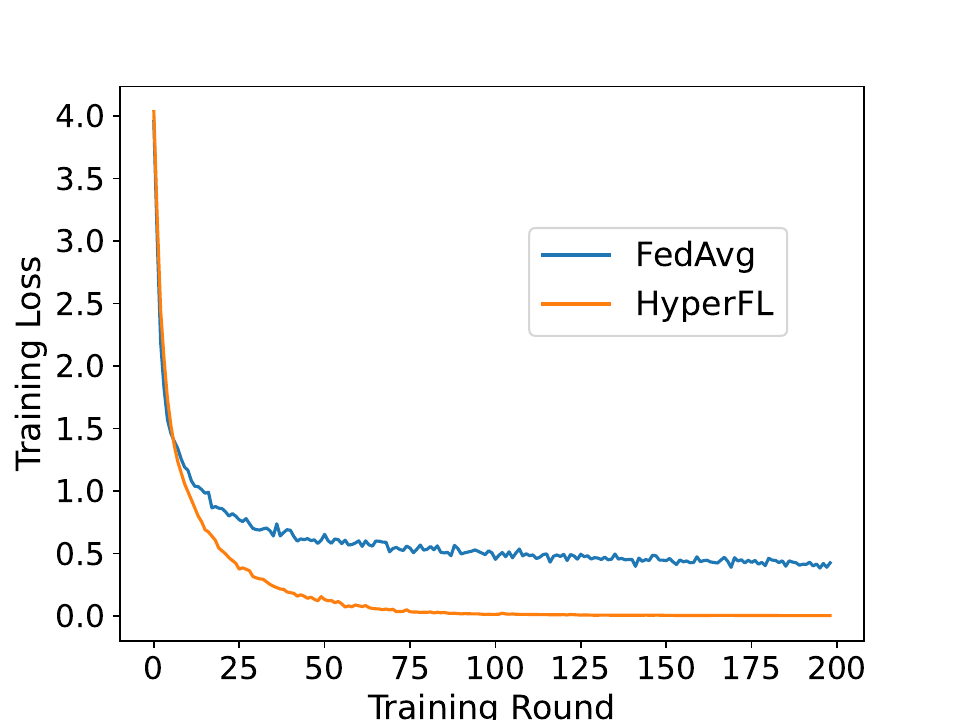} \label{fig:train_loss}}
  \hfill
    \subfigure[]{\includegraphics[width=0.47\linewidth]{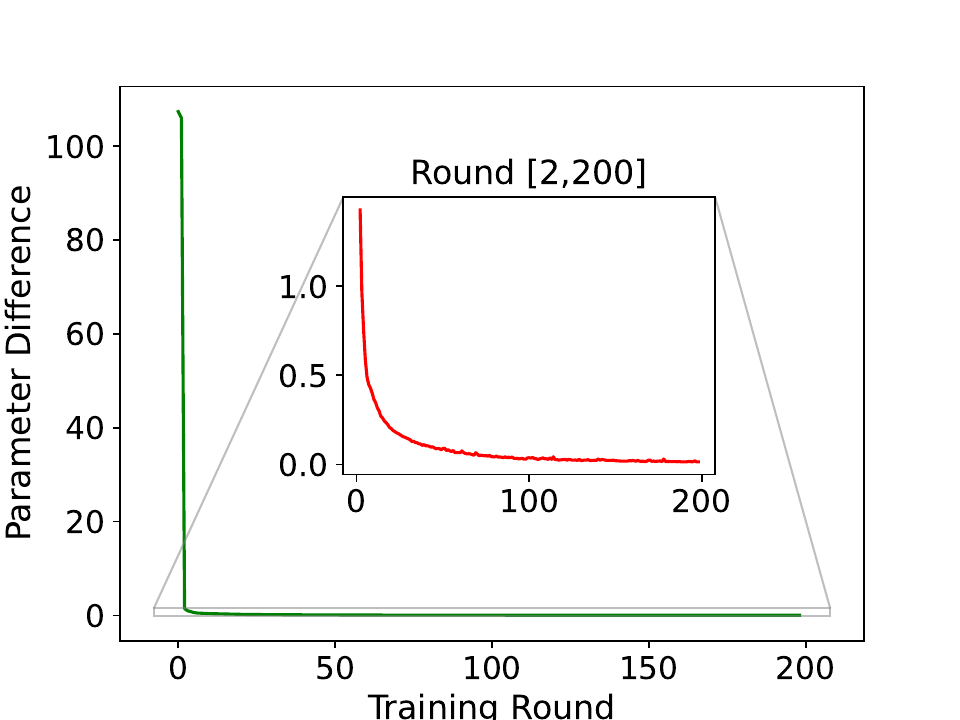} \label{fig:f_convergence}}
  \vskip -0.1in
  \caption{(a) Average training loss of different methods on the EMNIST dataset with 20 clients. (b) Parameter difference of the generated feature extractor of one client between adjacent training round on the EMNIST dataset with 20 clients.}
  \label{fig:convergence}
\vskip -0.2in
\end{figure}

\section{Conclusion}
\label{sec:conclusion}

In this paper, we propose HyperFL, a novel federated learning framework that ``breaks the direct connection'' between the shared parameters and the local private data to defend against GIA. Specifically, this framework utilizes hypernetworks to generate the parameters of the local model and only the hypernetwork parameters are uploaded to the server for aggregation to defend against GIA while without compromising performance or incurring heavy computation overhead. We hope that the proposed HyperFL framework can encourage the research community to consider the importance of developing enhanced privacy preservation FL frameworks, as an alternative to current research efforts on defense mechanisms front.

\section*{Acknowledgments}

This work was supported by National Natural Science Foundation of China (62306253, 62206075), Guangdong Natural Science Fund-General Programme (2024A1515010233), and UCSC hellman fellowship.

\bibliography{main}

\begin{thebibliography}{91}
\providecommand{\natexlab}[1]{#1}

\bibitem[{Abadi et~al.(2016)Abadi, Chu, Goodfellow, McMahan, Mironov, Talwar, and Zhang}]{abadi2016deep}
Abadi, M.; Chu, A.; Goodfellow, I.; McMahan, H.~B.; Mironov, I.; Talwar, K.; and Zhang, L. 2016.
\newblock Deep learning with differential privacy.
\newblock In \emph{Proceedings of the 2016 ACM SIGSAC conference on computer and communications security}, 308--318.

\bibitem[{Baxter(2000)}]{baxter2000model}
Baxter, J. 2000.
\newblock A model of inductive bias learning.
\newblock \emph{Journal of artificial intelligence research}, 12: 149--198.

\bibitem[{Bietti et~al.(2022)Bietti, Wei, Dudik, Langford, and Wu}]{bietti2022personalization}
Bietti, A.; Wei, C.-Y.; Dudik, M.; Langford, J.; and Wu, S. 2022.
\newblock Personalization improves privacy-accuracy tradeoffs in federated learning.
\newblock In \emph{International Conference on Machine Learning}, 1945--1962. PMLR.

\bibitem[{Bonawitz et~al.(2017)Bonawitz, Ivanov, Kreuter, Marcedone, McMahan, Patel, Ramage, Segal, and Seth}]{bonawitz2017practical}
Bonawitz, K.; Ivanov, V.; Kreuter, B.; Marcedone, A.; McMahan, H.~B.; Patel, S.; Ramage, D.; Segal, A.; and Seth, K. 2017.
\newblock Practical secure aggregation for privacy-preserving machine learning.
\newblock In \emph{proceedings of the 2017 ACM SIGSAC Conference on Computer and Communications Security}, 1175--1191.

\bibitem[{Carey, Du, and Wu(2022)}]{carey2022robust}
Carey, A.~N.; Du, W.; and Wu, X. 2022.
\newblock Robust Personalized Federated Learning under Demographic Fairness Heterogeneity.
\newblock In \emph{2022 IEEE International Conference on Big Data (Big Data)}, 1425--1434. IEEE.

\bibitem[{Cohen et~al.(2017)Cohen, Afshar, Tapson, and Van~Schaik}]{cohen2017emnist}
Cohen, G.; Afshar, S.; Tapson, J.; and Van~Schaik, A. 2017.
\newblock EMNIST: Extending MNIST to handwritten letters.
\newblock In \emph{2017 international joint conference on neural networks (IJCNN)}, 2921--2926. IEEE.

\bibitem[{Collins et~al.(2021)Collins, Hassani, Mokhtari, and Shakkottai}]{collins2021exploiting}
Collins, L.; Hassani, H.; Mokhtari, A.; and Shakkottai, S. 2021.
\newblock Exploiting shared representations for personalized federated learning.
\newblock In \emph{International conference on machine learning}, 2089--2099. PMLR.

\bibitem[{Dang et~al.(2021)Dang, Thakkar, Ramaswamy, Mathews, Chin, and Beaufays}]{dang2021revealing}
Dang, T.; Thakkar, O.; Ramaswamy, S.; Mathews, R.; Chin, P.; and Beaufays, F. 2021.
\newblock Revealing and protecting labels in distributed training.
\newblock \emph{Advances in Neural Information Processing Systems}, 34: 1727--1738.

\bibitem[{Darlow et~al.(2018)Darlow, Crowley, Antoniou, and Storkey}]{darlow2018cinic}
Darlow, L.~N.; Crowley, E.~J.; Antoniou, A.; and Storkey, A.~J. 2018.
\newblock Cinic-10 is not imagenet or cifar-10.
\newblock \emph{arXiv preprint arXiv:1810.03505}.

\bibitem[{Deng et~al.(2009)Deng, Dong, Socher, Li, Li, and Fei-Fei}]{deng2009imagenet}
Deng, J.; Dong, W.; Socher, R.; Li, L.-J.; Li, K.; and Fei-Fei, L. 2009.
\newblock Imagenet: A large-scale hierarchical image database.
\newblock In \emph{2009 IEEE conference on computer vision and pattern recognition}, 248--255. Ieee.

\bibitem[{Dosovitskiy et~al.(2021)Dosovitskiy, Beyer, Kolesnikov, Weissenborn, Zhai, Unterthiner, Dehghani, Minderer, Heigold, Gelly et~al.}]{dosovitskiy2021image}
Dosovitskiy, A.; Beyer, L.; Kolesnikov, A.; Weissenborn, D.; Zhai, X.; Unterthiner, T.; Dehghani, M.; Minderer, M.; Heigold, G.; Gelly, S.; et~al. 2021.
\newblock An Image is Worth 16x16 Words: Transformers for Image Recognition at Scale.
\newblock In \emph{International Conference on Learning Representations}.

\bibitem[{Dwork(2006)}]{dwork2006differential}
Dwork, C. 2006.
\newblock Differential privacy.
\newblock In \emph{International colloquium on automata, languages, and programming}, 1--12. Springer.

\bibitem[{Erdo{\u{g}}an, K{\"u}p{\c{c}}{\"u}, and {\c{C}}i{\c{c}}ek(2022)}]{erdougan2022unsplit}
Erdo{\u{g}}an, E.; K{\"u}p{\c{c}}{\"u}, A.; and {\c{C}}i{\c{c}}ek, A.~E. 2022.
\newblock Unsplit: Data-oblivious model inversion, model stealing, and label inference attacks against split learning.
\newblock In \emph{Proceedings of the 21st Workshop on Privacy in the Electronic Society}, 115--124.

\bibitem[{Fredrikson, Jha, and Ristenpart(2015)}]{fredrikson2015model}
Fredrikson, M.; Jha, S.; and Ristenpart, T. 2015.
\newblock Model inversion attacks that exploit confidence information and basic countermeasures.
\newblock In \emph{Proceedings of the 22nd ACM SIGSAC conference on computer and communications security}, 1322--1333.

\bibitem[{Geiping et~al.(2020)Geiping, Bauermeister, Dr{\"o}ge, and Moeller}]{geiping2020inverting}
Geiping, J.; Bauermeister, H.; Dr{\"o}ge, H.; and Moeller, M. 2020.
\newblock Inverting gradients-how easy is it to break privacy in federated learning?
\newblock \emph{Advances in Neural Information Processing Systems}, 33: 16937--16947.

\bibitem[{Geng et~al.(2023)Geng, Mou, Li, Li, Beyan, Decker, and Rong}]{geng2023improved}
Geng, J.; Mou, Y.; Li, Q.; Li, F.; Beyan, O.; Decker, S.; and Rong, C. 2023.
\newblock Improved Gradient Inversion Attacks and Defenses in Federated Learning.
\newblock \emph{IEEE Transactions on Big Data}.

\bibitem[{Gentry(2009)}]{gentry2009fully}
Gentry, C. 2009.
\newblock \emph{A fully homomorphic encryption scheme}.
\newblock Stanford university.

\bibitem[{Geyer, Klein, and Nabi(2017)}]{geyer2017differentially}
Geyer, R.~C.; Klein, T.; and Nabi, M. 2017.
\newblock Differentially private federated learning: A client level perspective.
\newblock \emph{arXiv preprint arXiv:1712.07557}.

\bibitem[{Guo et~al.(2024)Guo, Zeng, Wang, Fan, Wang, and Qu}]{guo2024selective}
Guo, P.; Zeng, S.; Wang, Y.; Fan, H.; Wang, F.; and Qu, L. 2024.
\newblock Selective Aggregation for Low-Rank Adaptation in Federated Learning.
\newblock \emph{arXiv preprint arXiv:2410.01463}.

\bibitem[{Ha, Dai, and Le(2017)}]{david2017hypernetworks}
Ha, D.; Dai, A.~M.; and Le, Q.~V. 2017.
\newblock HyperNetworks.
\newblock In \emph{The 5th International Conference on Learning Representations}.

\bibitem[{Hatamizadeh et~al.(2023)Hatamizadeh, Yin, Molchanov, Myronenko, Li, Dogra, Feng, Flores, Kautz, Xu et~al.}]{hatamizadeh2023gradient}
Hatamizadeh, A.; Yin, H.; Molchanov, P.; Myronenko, A.; Li, W.; Dogra, P.; Feng, A.; Flores, M.~G.; Kautz, J.; Xu, D.; et~al. 2023.
\newblock Do gradient inversion attacks make federated learning unsafe?
\newblock \emph{IEEE Transactions on Medical Imaging}.

\bibitem[{He et~al.(2022)He, Chen, Xie, Li, Doll{\'a}r, and Girshick}]{he2022masked}
He, K.; Chen, X.; Xie, S.; Li, Y.; Doll{\'a}r, P.; and Girshick, R. 2022.
\newblock Masked autoencoders are scalable vision learners.
\newblock In \emph{Proceedings of the IEEE/CVF conference on computer vision and pattern recognition}, 16000--16009.

\bibitem[{He et~al.(2016)He, Zhang, Ren, and Sun}]{he2016deep}
He, K.; Zhang, X.; Ren, S.; and Sun, J. 2016.
\newblock Deep residual learning for image recognition.
\newblock In \emph{Proceedings of the IEEE conference on computer vision and pattern recognition}, 770--778.

\bibitem[{He, Zhang, and Lee(2019)}]{he2019model}
He, Z.; Zhang, T.; and Lee, R.~B. 2019.
\newblock Model inversion attacks against collaborative inference.
\newblock In \emph{Proceedings of the 35th Annual Computer Security Applications Conference}, 148--162.

\bibitem[{He, Zhang, and Lee(2020)}]{he2020attacking}
He, Z.; Zhang, T.; and Lee, R.~B. 2020.
\newblock Attacking and protecting data privacy in edge--cloud collaborative inference systems.
\newblock \emph{IEEE Internet of Things Journal}, 8(12): 9706--9716.

\bibitem[{Hore and Ziou(2010)}]{hore2010image}
Hore, A.; and Ziou, D. 2010.
\newblock Image quality metrics: PSNR vs. SSIM.
\newblock In \emph{2010 20th international conference on pattern recognition}, 2366--2369. IEEE.

\bibitem[{Houlsby et~al.(2019)Houlsby, Giurgiu, Jastrzebski, Morrone, De~Laroussilhe, Gesmundo, Attariyan, and Gelly}]{houlsby2019parameter}
Houlsby, N.; Giurgiu, A.; Jastrzebski, S.; Morrone, B.; De~Laroussilhe, Q.; Gesmundo, A.; Attariyan, M.; and Gelly, S. 2019.
\newblock Parameter-efficient transfer learning for NLP.
\newblock In \emph{International Conference on Machine Learning}, 2790--2799. PMLR.

\bibitem[{Hu et~al.(2022)Hu, Wallis, Allen-Zhu, Li, Wang, Wang, Chen et~al.}]{hu2022lora}
Hu, E.~J.; Wallis, P.; Allen-Zhu, Z.; Li, Y.; Wang, S.; Wang, L.; Chen, W.; et~al. 2022.
\newblock LoRA: Low-Rank Adaptation of Large Language Models.
\newblock In \emph{International Conference on Learning Representations}.

\bibitem[{Huang et~al.(2021)Huang, Gupta, Song, Li, and Arora}]{huang2021evaluating}
Huang, Y.; Gupta, S.; Song, Z.; Li, K.; and Arora, S. 2021.
\newblock Evaluating gradient inversion attacks and defenses in federated learning.
\newblock \emph{Advances in Neural Information Processing Systems}, 34: 7232--7241.

\bibitem[{Jia et~al.(2022)Jia, Tang, Chen, Cardie, Belongie, Hariharan, and Lim}]{jia2022visual}
Jia, M.; Tang, L.; Chen, B.-C.; Cardie, C.; Belongie, S.; Hariharan, B.; and Lim, S.-N. 2022.
\newblock Visual prompt tuning.
\newblock In \emph{European Conference on Computer Vision}, 709--727. Springer.

\bibitem[{Jiang, Zhou, and Grossklags(2022)}]{jiang2022comprehensive}
Jiang, X.; Zhou, X.; and Grossklags, J. 2022.
\newblock Comprehensive analysis of privacy leakage in vertical federated learning during prediction.
\newblock \emph{Proceedings on Privacy Enhancing Technologies}.

\bibitem[{Karimi, Nutini, and Schmidt(2016)}]{karimi2016linear}
Karimi, H.; Nutini, J.; and Schmidt, M. 2016.
\newblock Linear convergence of gradient and proximal-gradient methods under the polyak-{\l}ojasiewicz condition.
\newblock In \emph{Machine Learning and Knowledge Discovery in Databases: European Conference, ECML PKDD 2016, Riva del Garda, Italy, September 19-23, 2016, Proceedings, Part I 16}, 795--811. Springer.

\bibitem[{Karimireddy et~al.(2020)Karimireddy, Kale, Mohri, Reddi, Stich, and Suresh}]{karimireddy2020scaffold}
Karimireddy, S.~P.; Kale, S.; Mohri, M.; Reddi, S.; Stich, S.; and Suresh, A.~T. 2020.
\newblock Scaffold: Stochastic controlled averaging for federated learning.
\newblock In \emph{International conference on machine learning}, 5132--5143. PMLR.

\bibitem[{Kariyappa et~al.(2023)Kariyappa, Guo, Maeng, Xiong, Suh, Qureshi, and Lee}]{kariyappa2023cocktail}
Kariyappa, S.; Guo, C.; Maeng, K.; Xiong, W.; Suh, G.~E.; Qureshi, M.~K.; and Lee, H.-H.~S. 2023.
\newblock Cocktail party attack: Breaking aggregation-based privacy in federated learning using independent component analysis.
\newblock In \emph{International Conference on Machine Learning}, 15884--15899. PMLR.

\bibitem[{Kingma and Ba(2015)}]{kingma2015adam}
Kingma, D.~P.; and Ba, J. 2015.
\newblock Adam: A method for stochastic optimization.
\newblock \emph{ICLR}.

\bibitem[{Krizhevsky, Hinton et~al.(2009)}]{krizhevsky2009learning}
Krizhevsky, A.; Hinton, G.; et~al. 2009.
\newblock Learning multiple layers of features from tiny images.

\bibitem[{Krizhevsky, Sutskever, and Hinton(2012)}]{krizhevsky2012imagenet}
Krizhevsky, A.; Sutskever, I.; and Hinton, G.~E. 2012.
\newblock Imagenet classification with deep convolutional neural networks.
\newblock \emph{Advances in neural information processing systems}, 25.

\bibitem[{Li et~al.(2023{\natexlab{a}})Li, Gu, Chen, Li, Wu, Ruan, Si, and Fan}]{li2023temporal}
Li, B.; Gu, H.; Chen, R.; Li, J.; Wu, C.; Ruan, N.; Si, X.; and Fan, L. 2023{\natexlab{a}}.
\newblock Temporal Gradient Inversion Attacks with Robust Optimization.
\newblock \emph{arXiv preprint arXiv:2306.07883}.

\bibitem[{Li et~al.(2023{\natexlab{b}})Li, Cai, Wang, Tang, Ding, Lin, and Shi}]{li2023fedtp}
Li, H.; Cai, Z.; Wang, J.; Tang, J.; Ding, W.; Lin, C.-T.; and Shi, Y. 2023{\natexlab{b}}.
\newblock FedTP: Federated Learning by Transformer Personalization.
\newblock \emph{IEEE Transactions on Neural Networks and Learning Systems}.

\bibitem[{Li et~al.(2020{\natexlab{a}})Li, Sahu, Talwalkar, and Smith}]{li2020bfederated}
Li, T.; Sahu, A.~K.; Talwalkar, A.; and Smith, V. 2020{\natexlab{a}}.
\newblock Federated learning: Challenges, methods, and future directions.
\newblock \emph{IEEE signal processing magazine}, 37(3): 50--60.

\bibitem[{Li et~al.(2020{\natexlab{b}})Li, Huang, Yang, Wang, and Zhang}]{li2019convergence}
Li, X.; Huang, K.; Yang, W.; Wang, S.; and Zhang, Z. 2020{\natexlab{b}}.
\newblock On the Convergence of FedAvg on Non-IID Data.
\newblock In \emph{International Conference on Learning Representations}.

\bibitem[{Li et~al.(2022{\natexlab{a}})Li, Wang, Chen, Zhang, Shafiq, and Gu}]{li2022e2egi}
Li, Z.; Wang, L.; Chen, G.; Zhang, Z.; Shafiq, M.; and Gu, Z. 2022{\natexlab{a}}.
\newblock E2EGI: End-to-End Gradient Inversion in Federated Learning.
\newblock \emph{IEEE Journal of Biomedical and Health Informatics}, 27(2): 756--767.

\bibitem[{Li et~al.(2022{\natexlab{b}})Li, Zhang, Liu, and Liu}]{li2022auditing}
Li, Z.; Zhang, J.; Liu, L.; and Liu, J. 2022{\natexlab{b}}.
\newblock Auditing privacy defenses in federated learning via generative gradient leakage.
\newblock In \emph{Proceedings of the IEEE/CVF Conference on Computer Vision and Pattern Recognition}, 10132--10142.

\bibitem[{Lin et~al.(2023)Lin, Wang, Li, and Shen}]{lin2023federated}
Lin, Y.; Wang, H.; Li, W.; and Shen, J. 2023.
\newblock Federated learning with hyper-network—A case study on whole slide image analysis.
\newblock \emph{Scientific Reports}, 13(1): 1724.

\bibitem[{Liu et~al.(2022)Liu, Guo, Yang, Fan, Lam, and Zhao}]{liu2022privacy}
Liu, Z.; Guo, J.; Yang, W.; Fan, J.; Lam, K.-Y.; and Zhao, J. 2022.
\newblock Privacy-preserving aggregation in federated learning: A survey.
\newblock \emph{IEEE Transactions on Big Data}.

\bibitem[{Liu et~al.(2021)Liu, Lin, Cao, Hu, Wei, Zhang, Lin, and Guo}]{liu2021swin}
Liu, Z.; Lin, Y.; Cao, Y.; Hu, H.; Wei, Y.; Zhang, Z.; Lin, S.; and Guo, B. 2021.
\newblock Swin transformer: Hierarchical vision transformer using shifted windows.
\newblock In \emph{Proceedings of the IEEE/CVF international conference on computer vision}, 10012--10022.

\bibitem[{Lowy and Razaviyayn(2023)}]{lowy2023private}
Lowy, A.; and Razaviyayn, M. 2023.
\newblock Private Federated Learning Without a Trusted Server: Optimal Algorithms for Convex Losses.
\newblock In \emph{The Eleventh International Conference on Learning Representations}.

\bibitem[{Luo et~al.(2022)Luo, Zhu, Fang, Kou, Hou, and Wang}]{luo2022effective}
Luo, Z.; Zhu, C.; Fang, L.; Kou, G.; Hou, R.; and Wang, X. 2022.
\newblock An effective and practical gradient inversion attack.
\newblock \emph{International Journal of Intelligent Systems}, 37(11): 9373--9389.

\bibitem[{Ma et~al.(2022)Ma, Naas, Sigg, and Lyu}]{ma2022privacy}
Ma, J.; Naas, S.-A.; Sigg, S.; and Lyu, X. 2022.
\newblock Privacy-preserving federated learning based on multi-key homomorphic encryption.
\newblock \emph{International Journal of Intelligent Systems}, 37(9): 5880--5901.

\bibitem[{Ma et~al.(2023)Ma, Sun, Cui, Li, Guan, and Liu}]{ma2023instance}
Ma, K.; Sun, Y.; Cui, J.; Li, D.; Guan, Z.; and Liu, J. 2023.
\newblock Instance-wise Batch Label Restoration via Gradients in Federated Learning.
\newblock In \emph{The Eleventh International Conference on Learning Representations}.

\bibitem[{McMahan et~al.(2017)McMahan, Moore, Ramage, Hampson, and y~Arcas}]{mcmahan2017communication}
McMahan, B.; Moore, E.; Ramage, D.; Hampson, S.; and y~Arcas, B.~A. 2017.
\newblock Communication-efficient learning of deep networks from decentralized data.
\newblock In \emph{Artificial intelligence and statistics}, 1273--1282. PMLR.

\bibitem[{McMahan et~al.(2018)McMahan, Ramage, Talwar, and Zhang}]{mcmahan2018learning}
McMahan, H.~B.; Ramage, D.; Talwar, K.; and Zhang, L. 2018.
\newblock Learning Differentially Private Recurrent Language Models.
\newblock In \emph{International Conference on Learning Representations}.

\bibitem[{Mou et~al.(2021)Mou, Fu, Lei, and Hu}]{mou2021verifiable}
Mou, W.; Fu, C.; Lei, Y.; and Hu, C. 2021.
\newblock A verifiable federated learning scheme based on secure multi-party computation.
\newblock In \emph{International Conference on Wireless Algorithms, Systems, and Applications}, 198--209. Springer.

\bibitem[{Mugunthan et~al.(2019)Mugunthan, Polychroniadou, Byrd, and Balch}]{mugunthan2019smpai}
Mugunthan, V.; Polychroniadou, A.; Byrd, D.; and Balch, T.~H. 2019.
\newblock Smpai: Secure multi-party computation for federated learning.
\newblock In \emph{Proceedings of the NeurIPS 2019 Workshop on Robust AI in Financial Services}, 1--9. MIT Press Cambridge, MA, USA.

\bibitem[{Nair and Hinton(2010)}]{nair2010rectified}
Nair, V.; and Hinton, G.~E. 2010.
\newblock Rectified linear units improve restricted boltzmann machines.
\newblock In \emph{Proceedings of the 27th international conference on machine learning (ICML-10)}, 807--814.

\bibitem[{Park and Lim(2022)}]{park2022privacy}
Park, J.; and Lim, H. 2022.
\newblock Privacy-preserving federated learning using homomorphic encryption.
\newblock \emph{Applied Sciences}, 12(2): 734.

\bibitem[{Phong et~al.(2017)Phong, Aono, Hayashi, Wang, and Moriai}]{phong2017privacy}
Phong, L.~T.; Aono, Y.; Hayashi, T.; Wang, L.; and Moriai, S. 2017.
\newblock Privacy-preserving deep learning: Revisited and enhanced.
\newblock In \emph{Applications and Techniques in Information Security: 8th International Conference, ATIS 2017, Auckland, New Zealand, July 6--7, 2017, Proceedings}, 100--110. Springer.

\bibitem[{Qu et~al.(2022)Qu, Zhou, Liang, Xia, Wang, Adeli, Fei-Fei, and Rubin}]{qu2022rethinking}
Qu, L.; Zhou, Y.; Liang, P.~P.; Xia, Y.; Wang, F.; Adeli, E.; Fei-Fei, L.; and Rubin, D. 2022.
\newblock Rethinking architecture design for tackling data heterogeneity in federated learning.
\newblock In \emph{Proceedings of the IEEE/CVF conference on computer vision and pattern recognition}, 10061--10071.

\bibitem[{Ren et~al.(2023)Ren, Deng, Xie, Ma, and Ma}]{ren2023gradient}
Ren, H.; Deng, J.; Xie, X.; Ma, X.; and Ma, J. 2023.
\newblock Gradient leakage defense with key-lock module for federated learning.
\newblock \emph{arXiv preprint arXiv:2305.04095}.

\bibitem[{Ruder(2016)}]{ruder2016overview}
Ruder, S. 2016.
\newblock An overview of gradient descent optimization algorithms.
\newblock \emph{arXiv preprint arXiv:1609.04747}.

\bibitem[{Russakovsky et~al.(2015)Russakovsky, Deng, Su, Krause, Satheesh, Ma, Huang, Karpathy, Khosla, Bernstein et~al.}]{russakovsky2015imagenet}
Russakovsky, O.; Deng, J.; Su, H.; Krause, J.; Satheesh, S.; Ma, S.; Huang, Z.; Karpathy, A.; Khosla, A.; Bernstein, M.; et~al. 2015.
\newblock Imagenet large scale visual recognition challenge.
\newblock \emph{International journal of computer vision}, 115: 211--252.

\bibitem[{Scheliga, M{\"a}der, and Seeland(2022)}]{scheliga2022precode}
Scheliga, D.; M{\"a}der, P.; and Seeland, M. 2022.
\newblock Precode-a generic model extension to prevent deep gradient leakage.
\newblock In \emph{Proceedings of the IEEE/CVF Winter Conference on Applications of Computer Vision}, 1849--1858.

\bibitem[{Shamsian et~al.(2021)Shamsian, Navon, Fetaya, and Chechik}]{shamsian2021personalized}
Shamsian, A.; Navon, A.; Fetaya, E.; and Chechik, G. 2021.
\newblock Personalized federated learning using hypernetworks.
\newblock In \emph{International Conference on Machine Learning}, 9489--9502. PMLR.

\bibitem[{Shen et~al.(2023)Shen, Ye, Kang, Hassani, and Shokri}]{shen2023share}
Shen, Z.; Ye, J.; Kang, A.; Hassani, H.; and Shokri, R. 2023.
\newblock Share your representation only: Guaranteed improvement of the privacy-utility tradeoff in federated learning.
\newblock \emph{The Eleventh International Conference on Learning Representations}.

\bibitem[{Sun et~al.(2020)Sun, Li, Wang, Yang, Li, and Chen}]{sun2020provable}
Sun, J.; Li, A.; Wang, B.; Yang, H.; Li, H.; and Chen, Y. 2020.
\newblock Provable defense against privacy leakage in federated learning from representation perspective.
\newblock \emph{arXiv preprint arXiv:2012.06043}.

\bibitem[{Tashakori et~al.(2023)Tashakori, Zhang, Wang, and Servati}]{tashakori2023semipfl}
Tashakori, A.; Zhang, W.; Wang, Z.~J.; and Servati, P. 2023.
\newblock SemiPFL: personalized semi-supervised federated learning framework for edge intelligence.
\newblock \emph{IEEE Internet of Things Journal}.

\bibitem[{Van~der Maaten and Hinton(2008)}]{van2008visualizing}
Van~der Maaten, L.; and Hinton, G. 2008.
\newblock Visualizing data using t-SNE.
\newblock \emph{Journal of machine learning research}, 9(11).

\bibitem[{Vaswani et~al.(2017)Vaswani, Shazeer, Parmar, Uszkoreit, Jones, Gomez, Kaiser, and Polosukhin}]{vaswani2017attention}
Vaswani, A.; Shazeer, N.; Parmar, N.; Uszkoreit, J.; Jones, L.; Gomez, A.~N.; Kaiser, {\L}.; and Polosukhin, I. 2017.
\newblock Attention is all you need.
\newblock \emph{Advances in neural information processing systems}, 30.

\bibitem[{Wang, Liang, and He(2024)}]{wang2024towards}
Wang, Y.; Liang, J.; and He, R. 2024.
\newblock Towards Eliminating Hard Label Constraints in Gradient Inversion Attacks.
\newblock In \emph{The Twelfth International Conference on Learning Representations}.

\bibitem[{Wang et~al.(2023)Wang, Wang, Dinh, Du, and Xu}]{wang2023learning}
Wang, Y.; Wang, X.; Dinh, A.-D.; Du, B.; and Xu, C. 2023.
\newblock Learning to Schedule in Diffusion Probabilistic Models.
\newblock In \emph{Proceedings of the 29th ACM SIGKDD Conference on Knowledge Discovery and Data Mining}.

\bibitem[{Wang et~al.(2004)Wang, Bovik, Sheikh, and Simoncelli}]{wang2004image}
Wang, Z.; Bovik, A.~C.; Sheikh, H.~R.; and Simoncelli, E.~P. 2004.
\newblock Image quality assessment: from error visibility to structural similarity.
\newblock \emph{IEEE transactions on image processing}, 13(4): 600--612.

\bibitem[{Wei et~al.(2020)Wei, Liu, Loper, Chow, Gursoy, Truex, and Wu}]{wei2020framework}
Wei, W.; Liu, L.; Loper, M.; Chow, K.-H.; Gursoy, M.~E.; Truex, S.; and Wu, Y. 2020.
\newblock A framework for evaluating gradient leakage attacks in federated learning.
\newblock \emph{arXiv preprint arXiv:2004.10397}.

\bibitem[{Xiao, Rasul, and Vollgraf(2017)}]{xiao2017fashion}
Xiao, H.; Rasul, K.; and Vollgraf, R. 2017.
\newblock Fashion-mnist: a novel image dataset for benchmarking machine learning algorithms.
\newblock \emph{arXiv preprint arXiv:1708.07747}.

\bibitem[{Xu et~al.(2015)Xu, Wang, Chen, and Li}]{xu2015empirical}
Xu, B.; Wang, N.; Chen, T.; and Li, M. 2015.
\newblock Empirical evaluation of rectified activations in convolutional network.
\newblock \emph{arXiv preprint arXiv:1505.00853}.

\bibitem[{Xu, Tong, and Huang(2023)}]{xu2023personalized}
Xu, J.; Tong, X.; and Huang, S.-L. 2023.
\newblock Personalized Federated Learning with Feature Alignment and Classifier Collaboration.
\newblock In \emph{The Eleventh International Conference on Learning Representations}.

\bibitem[{Xu et~al.(2022)Xu, Peng, Tan, Tian, Ma, and Niu}]{xu2022non}
Xu, Y.; Peng, C.; Tan, W.; Tian, Y.; Ma, M.; and Niu, K. 2022.
\newblock Non-interactive verifiable privacy-preserving federated learning.
\newblock \emph{Future Generation Computer Systems}, 128: 365--380.

\bibitem[{Yao(1982)}]{yao1982protocols}
Yao, A.~C. 1982.
\newblock Protocols for secure computations.
\newblock In \emph{23rd annual symposium on foundations of computer science (sfcs 1982)}, 160--164. IEEE.

\bibitem[{Yin et~al.(2021)Yin, Mallya, Vahdat, Alvarez, Kautz, and Molchanov}]{yin2021see}
Yin, H.; Mallya, A.; Vahdat, A.; Alvarez, J.~M.; Kautz, J.; and Molchanov, P. 2021.
\newblock See through gradients: Image batch recovery via gradinversion.
\newblock In \emph{Proceedings of the IEEE/CVF Conference on Computer Vision and Pattern Recognition}, 16337--16346.

\bibitem[{Yu, Yang, and Zhu(2019)}]{yu2019parallel}
Yu, H.; Yang, S.; and Zhu, S. 2019.
\newblock Parallel restarted SGD with faster convergence and less communication: Demystifying why model averaging works for deep learning.
\newblock In \emph{Proceedings of the AAAI Conference on Artificial Intelligence}, volume~33, 5693--5700.

\bibitem[{Yu, Bagdasaryan, and Shmatikov(2020)}]{yu2020salvaging}
Yu, T.; Bagdasaryan, E.; and Shmatikov, V. 2020.
\newblock Salvaging federated learning by local adaptation.
\newblock \emph{arXiv preprint arXiv:2002.04758}.

\bibitem[{Yue et~al.(2023)Yue, Jin, Wong, Baron, and Dai}]{yue2023gradient}
Yue, K.; Jin, R.; Wong, C.-W.; Baron, D.; and Dai, H. 2023.
\newblock Gradient obfuscation gives a false sense of security in federated learning.
\newblock In \emph{32nd USENIX Security Symposium (USENIX Security 23)}, 6381--6398.

\bibitem[{Zeng et~al.(2024)Zeng, Guo, Wang, Wang, Zhou, and Qu}]{zeng2024tackling}
Zeng, S.; Guo, P.; Wang, S.; Wang, J.; Zhou, Y.; and Qu, L. 2024.
\newblock Tackling data heterogeneity in federated learning via loss decomposition.
\newblock In \emph{International Conference on Medical Image Computing and Computer-Assisted Intervention}, 707--717. Springer.

\bibitem[{Zhang et~al.(2020{\natexlab{a}})Zhang, Li, Xia, Wang, Yan, and Liu}]{zhang2020batchcrypt}
Zhang, C.; Li, S.; Xia, J.; Wang, W.; Yan, F.; and Liu, Y. 2020{\natexlab{a}}.
\newblock $\{$BatchCrypt$\}$: Efficient homomorphic encryption for $\{$Cross-Silo$\}$ federated learning.
\newblock In \emph{2020 USENIX annual technical conference (USENIX ATC 20)}, 493--506.

\bibitem[{Zhang et~al.(2024)Zhang, Zeng, Zhang, Wang, Wang, Zhou, Liang, and Qu}]{zhang2024flhetbench}
Zhang, J.; Zeng, S.; Zhang, M.; Wang, R.; Wang, F.; Zhou, Y.; Liang, P.~P.; and Qu, L. 2024.
\newblock FLHetBench: Benchmarking Device and State Heterogeneity in Federated Learning.
\newblock In \emph{Proceedings of the IEEE/CVF Conference on Computer Vision and Pattern Recognition}, 12098--12108.

\bibitem[{Zhang et~al.(2019)Zhang, Lucas, Ba, and Hinton}]{zhang2019lookahead}
Zhang, M.; Lucas, J.; Ba, J.; and Hinton, G.~E. 2019.
\newblock Lookahead optimizer: k steps forward, 1 step back.
\newblock \emph{Advances in neural information processing systems}, 32.

\bibitem[{Zhang et~al.(2021)Zhang, Sapra, Fidler, Yeung, and Alvarez}]{zhang2021personalized}
Zhang, M.; Sapra, K.; Fidler, S.; Yeung, S.; and Alvarez, J.~M. 2021.
\newblock Personalized Federated Learning with First Order Model Optimization.
\newblock In \emph{International Conference on Learning Representations}.

\bibitem[{Zhang et~al.(2018)Zhang, Isola, Efros, Shechtman, and Wang}]{zhang2018unreasonable}
Zhang, R.; Isola, P.; Efros, A.~A.; Shechtman, E.; and Wang, O. 2018.
\newblock The unreasonable effectiveness of deep features as a perceptual metric.
\newblock In \emph{Proceedings of the IEEE conference on computer vision and pattern recognition}, 586--595.

\bibitem[{Zhang et~al.(2020{\natexlab{b}})Zhang, Fu, Wang, Zhou, and Chen}]{zhang2020privacy}
Zhang, X.; Fu, A.; Wang, H.; Zhou, C.; and Chen, Z. 2020{\natexlab{b}}.
\newblock A privacy-preserving and verifiable federated learning scheme.
\newblock In \emph{ICC 2020-2020 IEEE International Conference on Communications (ICC)}, 1--6. IEEE.

\bibitem[{Zhao, Mopuri, and Bilen(2020)}]{zhao2020idlg}
Zhao, B.; Mopuri, K.~R.; and Bilen, H. 2020.
\newblock idlg: Improved deep leakage from gradients.
\newblock \emph{arXiv preprint arXiv:2001.02610}.

\bibitem[{Zhu and Blaschko(2021)}]{zhu2021r}
Zhu, J.; and Blaschko, M.~B. 2021.
\newblock R-GAP: Recursive Gradient Attack on Privacy.
\newblock In \emph{International Conference on Learning Representations}.

\bibitem[{Zhu, Liu, and Han(2019)}]{zhu2019deep}
Zhu, L.; Liu, Z.; and Han, S. 2019.
\newblock Deep leakage from gradients.
\newblock \emph{Advances in neural information processing systems}, 32.

\end{thebibliography}

\clearpage
\setcounter{section}{0}
\renewcommand{\thesection}{\Alph{section}}

\section{Proofs of Theoretical Results} \label{sec:proof}
\subsection{Proof of Theorem \ref{theorem1}}
\label{convergence_proof}
\begin{proof}
Let $\phi_i^t, \varphi_i^t, \mathbf{v}_i^t$ be the model parameters maintained in the $i$-th client at the $t$-th step. Let $\mathcal{I}_E$ be the set of global synchronization steps, i.e., $\mathcal{I}_E=\{nE \mid n=1,2,\cdots\}$. If $t+1 \in \mathcal{I}_E$, which represents the time step for communication, then the one-step update of HyperFL can be described as follows:
\begin{equation*}
\begin{array}{c}
\left(\begin{array}{c}
\phi_i^t \\
\varphi_i^t \\
\mathbf{v}_i^t
\end{array}\right) \underset{\text{SGD  of } \phi_i^t}{\longrightarrow}\left(\begin{array}{c}
\phi_i^{t+1} \\
\varphi_i^t \\
\mathbf{v}_i^t
\end{array}\right) \underset{\text{SGD of }\varphi_i^t, \mathbf{v}_i^t}{\longrightarrow}\left(\begin{array}{c}
\phi_i^{t+1} \\
\varphi_i^{t+1} \\
\mathbf{v}_i^{t+1}
\end{array}\right) \\
\\\underset{\text { if } t+1 \in \mathcal{I}_E}{\longrightarrow}\left(\begin{array}{c}
\phi_i^{t+1} \\
\sum_{j=1}^m w_j \varphi_j^{t+1} \\
\mathbf{v}_i^{t+1}
\end{array}\right).
\end{array}
\end{equation*}
For convenience, we denote the parameters in each sub-step above as follows:
\begin{equation*}
\begin{array}{c}
x_i^t = \left(\begin{array}{c}
\phi_i^t \\
\varphi_i^t \\
\mathbf{v}_i^t
\end{array}\right),
y_i^t = \left(\begin{array}{c}
\phi_i^{t+1} \\
\varphi_i^{t} \\
\mathbf{v}_i^{t}
\end{array}\right), \\
\\
x_i^{t+1,1} = \left(\begin{array}{c}
\phi_i^{t+1} \\
\varphi_i^{t+1} \\
\mathbf{v}_i^{t+1}
\end{array}\right),
x_i^{t+1,2} = \left(\begin{array}{c}
\phi_i^{t+1} \\
\sum_{j=1}^m w_j \varphi_j^{t+1} \\
\mathbf{v}_i^{t+1}
\end{array}\right),
\end{array}
\end{equation*}
\begin{equation*}
x_i^{t+1}= \begin{cases}x_i^{t+1, 1} & \text { if } t+1 \notin \mathcal{I}_E, \\ x_i^{t+1, 2} & \text { if } t+1 \in \mathcal{I}_E.\end{cases}
\end{equation*}
Here, the variable $x_i^{t+1, 1}$ represents the immediate result of one sub-step SGD update from the parameter of the previous sub-step $y_i^t$, and $x_i^{t+1, 2}$ represents the parameter obtained after communication steps (if possible). 
\\
Furthermore, we denote the learning rate and stochastic gradient of step $t$ as follows:
\begin{equation*}
\eta = \left(\begin{array}{c}
\eta_g \\
\eta_h \\
\eta_{v}
\end{array}\right),
\end{equation*}
\begin{equation*}
g_i^t=
\left(\begin{array}{c}
g_{i,\phi}^t \\
g_{i,\varphi}^t \\
g_{i,\mathbf{v}}^t
\end{array}\right)
= \left(\begin{array}{l}
\nabla_\phi \mathcal{L}_i\left(h\left(\mathbf{v}_i^t, \varphi_i^t\right), \phi_i^t, \xi_i^t\right) \\
\nabla_{\varphi} \mathcal{L}_i\left(h\left(\mathbf{v}_i^t, \varphi_i^t\right), \phi_i^{t+1}, \xi_i^t\right) \\
\nabla_\mathbf{v} \mathcal{L}_i\left(h\left(\mathbf{v}_i^t, \varphi_i^t\right), \phi_i^{t+1}, \xi_i^t\right)
\end{array}\right),
\end{equation*}
\begin{equation*}
\bar{g}_i^t=
\left(\begin{array}{c}
\bar{g}_{i,\phi}^t \\
\bar{g}_{i,\varphi}^t \\
\bar{g}_{i,\mathbf{v}}^t
\end{array}\right)
= \left(\begin{array}{l}
\nabla_\phi \mathcal{L}_i\left(h\left(\mathbf{v}_i^t, \varphi_i^t\right), \phi_i^t\right) \\
\nabla_{\varphi} \mathcal{L}_i\left(h\left(\mathbf{v}_i^t, \varphi_i^t\right), \phi_i^{t+1}\right) \\
\nabla_\mathbf{v} \mathcal{L}_i\left(h\left(\mathbf{v}_i^t, \varphi_i^t\right), \phi_i^{t+1}\right)
\end{array}\right),
\end{equation*}
where $\xi_i^t$ is the data uniformly chosen from the local data set of client $i$ at step $t$, then $\mathbb{E}\left[g_i^t\right] = \bar{g}_i^t$.
\\
\\
Next, we apply the inequality of the smoothness Assumption \ref{assumption1} to each sub-step of the one-step update for client $i$.
Firstly, by the smoothness of $\mathcal{L}_i$, we have
\begin{equation}\label{l-smooth1}
\mathcal{L}_i\left(y_i^t\right) \leq \mathcal{L}_i\left(x_i^t\right)+\left\langle y_i^t-x_i^t, \bar{g}_{i,\phi}^t\right\rangle+\frac{L}{2}\left\|y_i^t-x_i^t\right\|^2.
\end{equation}
For the second term on the right side of inequality (\ref{l-smooth1}), according to the law of total expectation, we have
\begin{equation*}
\begin{aligned} 
\mathbb{E}\left[\left\langle y_i^t-x_i^t, \bar{g}_{i,\phi}^t\right\rangle\right]
&= \mathbb{E}\left[\left\langle -\eta_g g_{i,\phi}^t, \bar{g}_{i,\phi}^t\right\rangle\right]\\  
&= \mathbb{E}\left\{\mathbb{E}\left[\left\langle -\eta_g g_{i,\phi}^t, \bar{g}_{i,\phi}^t\right\rangle\right]|\xi_i^t\right\}\\  
&= \mathbb{E}\left\{\mathbb{E}\left[\left\langle -\eta_g g_{i,\phi}^t|\xi_i^t, \bar{g}_{i,\phi}^t\right\rangle\right]\right\}\\  
&= \mathbb{E}\left[\left\langle -\eta_g \bar{g}_{i,\phi}^t, \bar{g}_{i,\phi}^t\right\rangle\right]\\
& = -\eta_{g}\mathbb{E}\left[(\bar{g}_{i,\phi}^t)^2\right].
\end{aligned} 
\end{equation*}
For the third term on the right side of the inequality (\ref{l-smooth1}), we have
\begin{equation*}
\begin{aligned}
\mathbb{E}\left[\frac{L}{2}\left\|y_i^t-x_i^t\right\|^2\right]
&= \mathbb{E}\left[\frac{L}{2}\left\|-\eta_g g_{i,\phi}^t\right\|^2\right]\\
 &= \eta_g^2\frac{L}{2}\mathbb{E}\left[\left| g_{i,\phi}^t\right\|^2\right] \\
 &\leq \eta_g^2\frac{LG^2}{2},
\end{aligned}
\end{equation*}
where in the last inequality, we use the bounded gradient Assumption \ref{assumption3}.\\
From the above inequalities, and taking the expectation of inequality (\ref{l-smooth1}), we can get
\begin{equation}
\label{eq1}
\mathbb{E}\left[\mathcal{L}_i\left(y_i^t\right) - \mathcal{L}_i\left(x_i^t\right)\right] \leq -\eta_g \mathbb{E}\left[(\bar{g}_{i,\phi}^t)^2\right] + \eta_g^2\frac{LG^2}{2}.
\end{equation}
Secondly, by the smoothness of $\mathcal{L}_i$, we have
\begin{equation}\label{l-smooth2}
\begin{aligned}
\mathcal{L}_i\left(x_i^{t+1,1}\right) \leq &  \mathcal{L}_i\left(y_i^t\right)+\left\langle x_i^{t+1,1}-y_i^t, \bar{g}_{i,\phi}^t\right\rangle \\
&+\frac{L}{2}\left\|x_i^{t+1,1}-y_i^t\right\|^2.
\end{aligned}
\end{equation}
Similar to inequality (\ref{eq1}), taking the expectation of inequality (\ref{l-smooth2}), we get
{\small
\begin{equation}
\label{eq2}
\begin{aligned}
&\mathbb{E}\left[\mathcal{L}_i\left(x_i^{t+1,1}\right) - \mathcal{L}_i\left(y_i^t\right)\right] \\
\leq & -\eta_h\mathbb{E}\left[(\bar{g}_{i,\varphi}^t)^2\right] -\eta_{v}\mathbb{E}\left[(\bar{g}_{i,\mathbf{v}}^t)^2\right] + (\eta_h^2+\eta_{v}^2)\frac{LG^2}{2}.
\end{aligned}
\end{equation}
}Thirdly, by the smoothness of $\mathcal{L}_i$, we have
{\small
\begin{equation}\label{l-smooth3}
\begin{aligned}
\mathcal{L}_i\left(x_i^{t+1,2}\right) \leq & \mathcal{L}_i\left(x_i^{t+1,1}\right) + \left\langle x_i^{t+1,2}-x_i^{t+1,1}, \bar{g}_{i,\varphi}^t\right\rangle \\
&+\frac{L}{2}\left\|x_i^{t+1,2}-x_i^{t+1,1}\right\|^2.
\end{aligned}
\end{equation}
}From the iterative formula of SGD, it is clear that
{\small
\begin{equation}
\label{cumm_update_equa}
    \varphi_j^{t+1} = \varphi_j^{t-E+1} - \eta_h \sum_{t_0=t-E+1}^tg_{j,\varphi}^{t_0}, \quad \forall j, t+1 \in \mathcal{I}_E.
\end{equation}
}Then, for the third term on the right side of inequality (\ref{l-smooth3}), we apply the equality (\ref{cumm_update_equa}) and take the expectation, which yields
\begin{equation*}
\begin{aligned}
&\mathbb{E}\left[\frac{L}{2}\left\|x_i^{t+1,2}-x_i^{t+1,1}\right\|^2\right]\\
=  &\frac{L}{2} \mathbb{E}\left[\left\| - \eta_h\sum_{j=1}^m w_j \sum_{t_0=t-E+1}^t (g_{j,\varphi}^{t_0}-g_{i,\varphi}^{t_0}) \right\|^2\right] \\
\leq &\eta_h^2\frac{L}{2} \sum_{j=1}^m w_j\mathbb{E}\left[\left\| \sum_{t_0=t-E+1}^t (g_{j,\varphi}^{t_0}-g_{i,\varphi}^{t_0}) \right\|^2\right] \\
\leq& \eta_h^2\frac{L}{2} \sum_{j=1}^m w_j \sum_{t_0=t-E+1}^t \mathbb{E}\left[\left\| (g_{j,\varphi}^{t_0}-g_{i,\varphi}^{t_0}) \right\|^2\right]\\
\leq& \eta_h^2\frac{L}{2} \sum_{j=1}^m w_j \sum_{t_0=t-E+1}^t \mathbb{E}\left[\frac{1}{2} \left\|g_{j,\varphi}^{t_0}\right\|^2+\frac{1}{2} \left\|g_{i,\varphi}^{t_0}\right\|^2\right]\\
\leq& \eta_h^2 \frac{(E-1)LG^2}{2},
\end{aligned}
\end{equation*}
where in the last inequality, we use the bounded gradient Assumption \ref{assumption3}.\\
For the second term on the right side of inequality (\ref{l-smooth3}), we take the expectation and get
\begin{equation*}
\begin{aligned}
&\mathbb{E}\left[\left\langle x_i^{t+1,2}-x_i^{t+1,1}, \bar{g}_{i,\varphi}^t\right\rangle\right]\\
\leq & \frac{1}{2 \eta_h}\left(x_i^{t+1,2}-x_i^{t+1,1}\right)^2+\frac{1}{2} \eta_h \left(\bar{g}_{i, \varphi}^t\right)^2 \\
\leq & \frac{1}{2 \eta_h} \eta_h^2(E-1) G^2+\frac{1}{2} \eta_h \left(\bar{g}_{i, \varphi}^t\right)^2 \\
=& \eta_h \frac{(E-1) G^2}{2}+\frac{1}{2} \eta_h \left(\bar{g}_{i, \varphi}^t\right)^2,
\end{aligned}
\end{equation*}
where we use the Cauchy-Schwarz inequality and the AM-GM inequality in the first inequality, and the bounded gradient Assumption \ref{assumption3} in the second inequality above. \\
Then, based on the above inequalities and taking the expectation of inequality (\ref{l-smooth3}), we have
{\small
\begin{equation}
\label{eq3}
\begin{aligned}
& \mathbb{E}\left[\mathcal{L}_i\left(x_i^{t+1,2}\right) - \mathcal{L}_i\left(x_i^{t+1,1}\right)\right] \\
\leq &\eta_h^2 \frac{(E-1)LG^2}{2} + \eta_h \frac{(E-1) G^2}{2}+\frac{1}{2} \eta_h \left(\bar{g}_{i, \varphi}^t\right)^2.
\end{aligned}
\end{equation}
}Summing up inequalities (\ref{eq1}) and (\ref{eq2}), we get
{\small
\begin{equation}
\begin{aligned}
\mathbb{E}\left[\mathcal{L}_i\left(x_i^{t+1,1}\right) - \mathcal{L}_i\left(x_i^t\right)\right] \leq -\eta^{\top} \mathbb{E}\left[(g_i^t)^2\right] + \left\|\eta\right\|^2 \frac{LG^2}{2}.
\end{aligned}
\end{equation}
}
Summing up inequalities (\ref{eq1}), (\ref{eq2}), and (\ref{eq3}), we get
{\small
\begin{equation}
\begin{aligned}
&\mathbb{E}\left[\mathcal{L}_i\left(x_i^{t+1,2}\right) - \mathcal{L}_i\left(x_i^t\right)\right] \\
\leq  &-\left(\begin{array}{c}\eta_g \\ \frac{1}{2} \eta_h \\ \eta_v\end{array}\right)^{\top} \mathbb{E}\left[\left(\bar{g}_i^t\right)^2\right] \\
&+ \left[\eta_g^2+\eta_v^2+(E-1) \eta_h^2+\frac{E-1}{L} \eta_h\right] \frac{L G^2}{2}.
\end{aligned}
\end{equation}
}Then, let 
$\eta_{min} = \min\{\eta_g,\frac{1}{2}\eta_h,\eta_v\}$, we have
\begin{equation} \label{neq:eta_min}
\begin{aligned}
&\mathbb{E}\left[\mathcal{L}_i\left(x_i^{t+1,2}\right) - \mathcal{L}_i\left(x_i^t\right)\right] \\
\leq  &-\eta_{min} \mathbb{E}\left[\left\|\bar{g}_i^t\right\|^2\right] \\
&+\left[\eta_g^2+\eta_v^2+(E-1) \eta_h^2+\frac{E-1}{L} \eta_h\right] \frac{L G^2}{2}.
\end{aligned}
\end{equation}
By rewriting the above inequality (\ref{neq:eta_min}), we get
{\small
\begin{equation} \label{neq:e_g}
\begin{aligned}
\mathbb{E}\left[\left\|\bar{g}_i^t\right\|^2\right] \leq &\frac{\mathbb{E}\left[\mathcal{L}_i\left(x_i^{t}\right) - \mathcal{L}_i\left(x_i^{t+1}\right)\right]}{\eta_{min}} \\
&+ \frac{\eta_g^2+\eta_v^2+(E-1) \eta_h^2+\frac{E-1}{L} \eta_h}{\eta_{min}}\frac{LG^2}{2}.
\end{aligned}
\end{equation}
}Let $M$ be a constant that satisfies the inequality $\eta_g^2+\eta_v^2+(E-1) \eta_h^2+\frac{E-1}{L} \eta_h \leq M \eta_{min}^2$, the aforementioned inequality (\ref{neq:e_g}) can be further simplified as
{\small
\begin{equation}
\label{leq1}
\mathbb{E}\left[\left\|\bar{g}_i^t\right\|^2\right] \leq \frac{\mathbb{E}\left[\mathcal{L}_i\left(x_i^{t}\right) - \mathcal{L}_i\left(x_i^{t+1}\right)\right]}{\eta_{min}} + \eta_{min} \frac{LMG^2}{2}.
\end{equation}
}
Now, by repeatedly applying inequality (\ref{leq1}) for different values of $t$ and summing up the results, we get
\begin{equation}
\label{leq29}
\begin{aligned}
\sum_{t=1}^{T} \mathbb{E}\left[\left\|\bar{g}_i^t\right\|^2\right] \leq &\frac{\mathbb{E}\left[\mathcal{L}_i\left(x_i^{1}\right) - \mathcal{L}_i\left(x_i^*\right)\right]}{\eta_{min}} \\
&+ \eta_{min} \frac{LMG^2}{2}T.
\end{aligned}
\end{equation}
Dividing both side of inequality (\ref{leq29}) by $T$, we get
\begin{equation}
\begin{aligned}
\frac{1}{T}\sum_{t=1}^{T} \mathbb{E}\left[\left\|\bar{g}_i^t\right\|^2\right] \leq &\frac{\mathbb{E}\left[\mathcal{L}_i\left(x_i^{1}\right) - \mathcal{L}_i\left(x_i^*\right)\right]}{\eta_{min}T}\\
&+ \eta_{min} \frac{LMG^2}{2}.
\end{aligned}
\end{equation}
Let us assume that $\mathcal{L}_i\left(x_i^{1}\right) - \mathcal{L}_i\left(x_i^*\right) \leq D, \forall i$, and we set $\eta_{min} = \sqrt{\frac{2D}{LMG^2T}}$. Then, we have
\begin{equation}
\frac{1}{T}\sum_{t=1}^{T} \mathbb{E}\left[\left\|\bar{g}_i^t\right\|^2\right] \leq 2\sqrt{\frac{LMG^2D}{2T}}.
\end{equation}
Thus, we can get
\begin{equation}
\frac{1}{mT}\sum_{i=1}^{m}\sum_{t=1}^{T} \mathbb{E}\left[\left\|\bar{g}_i^t\right\|^2\right] \leq 2\sqrt{\frac{LMG^2D}{2T}}.
\end{equation}
\end{proof}

\subsection{Proof of Corollary \ref{corollary1}}
\begin{proof}
By rewriting inequality (\ref{leq1}), we have
\begin{equation}
\label{leq32}
\begin{aligned}
&\mathbb{E}\left[\mathcal{L}_i\left(x_i^{t+1}\right)-\mathcal{L}_i\left(x_i^t\right)\right] \\ 
\leq &-\eta_{min } \mathbb{E}\left[\left\|\bar{g}_i^t\right\|^2\right] +\eta_{min }^2 \frac{L M G^2}{2}.
\end{aligned}
\end{equation}
By the PL Assumption \ref{pl_assumption}, we have
{\small
\begin{equation}
\label{leq33}
\begin{aligned}
&\mathbb{E}\left[\mathcal{L}_i\left(x_i^{t+1}\right)-\mathcal{L}_i\left(x_i^t\right)\right] \\
\leq &-2\eta_{min}\mu \mathbb{E}\left[\mathcal{L}_i\left(x_i^{t}\right) - \mathcal{L}_i\left(x_i^*\right)\right] + \eta_{\min }^2 \frac{L M G^2}{2}.
\end{aligned}
\end{equation}
}Then, 
\begin{equation*} 
\begin{aligned}
&\mathbb{E}\left[\mathcal{L}_i\left(x_i^{t+1}\right) - \mathcal{L}_i\left(x_i^*\right)\right]\\
= & \mathbb{E}\left[\mathcal{L}_i\left(x_i^{t}\right)- \mathcal{L}_i\left(x_i^*\right)\right] + \mathbb{E}\left[\mathcal{L}_i\left(x_i^{t+1}\right)-\mathcal{L}_i\left(x_i^t\right)\right]\\
\leq & \mathbb{E}\left[\mathcal{L}_i\left(x_i^{t}\right)- \mathcal{L}_i\left(x_i^*\right)\right]-2\eta_{min}\mu \mathbb{E}\left[\mathcal{L}_i\left(x_i^{t}\right) - \mathcal{L}_i\left(x_i^*\right)\right]\\
& + \eta_{min}^2\frac{LMG^2}{2}\\
= & (1-2\eta_{min}\mu) \mathbb{E}\left[\mathcal{L}_i\left(x_i^{t}\right) - \mathcal{L}_i\left(x_i^*\right)\right] + \eta_{min}^2\frac{LMG^2}{2}\\
\leq & \left(1-2 \eta_{min } \mu\right)^2 \mathbb{E}\left[\mathcal{L}_i\left(x_i^{t-1}\right)-\mathcal{L}_i\left(x_i^*\right)\right]\\
& +\sum_{\tau=0}^1\left(1-2 \eta_{\min } \mu\right)^\tau \eta_{\min }^2 \frac{L M G^2}{2} \\
\cdots \\
\leq & \left(1-2 \eta_{min } \mu\right)^{t+1} \mathbb{E}\left[\mathcal{L}_i\left(x_i^0\right)-\mathcal{L}_i\left(x_i^*\right)\right]\\
&+\sum_{\tau=0}^t\left(1-2 \eta_{min } \mu\right)^\tau \eta_{min }^2 \frac{L M G^2}{2}\\
\leq & (1-\eta_{min}\mu)^{t+1} D + \eta_{min}\frac{LMG^2}{4\mu}.
\end{aligned}
\end{equation*}
Therefore, we have
\begin{equation}
\begin{aligned}
&\mathbb{E}\left[\frac{1}{m}\sum_{i=1}^m\mathcal{L}_i\left(x_i^{t+1}\right)\right] - \mathcal{L}^* \\
\leq &(1-\eta_{min}\mu)^{t+1} D+ \eta_{min}\frac{LMG^2}{4\mu}.
\end{aligned}
\end{equation}
If we set $\eta_{min} \leq \frac{\mu\epsilon}{LMG^2}$, after $O(\frac{1}{\epsilon} \log (\frac{1}{\epsilon}))$ steps, we have 
\begin{equation}
\mathbb{E}\left[\frac{1}{m}\sum_{i=1}^m\mathcal{L}_i\left(x_i^{t+1}\right)\right] - \mathcal{L}^* \leq \epsilon.
\end{equation}
\end{proof}

\section{Generalization Bound} \label{sec:generalization_bound}
We further provide the generalization bound for HyperFL by employing the methodology outlined in \cite{baxter2000model}. First, we make the following assumption: 
\begin{assumption}
\label{assumption_gen_bound}
We assume the weights of hypernetworks $\varphi_i$, the client embeddings $\mathbf{v}_i$ and the weights of classifiers $\phi_i$ are bounded in a ball of radius $R$, in which the following Lipschitz conditions hold:
\begin{equation*}
\begin{aligned}
    &\left| \mathcal{L}_i(h(\varphi_i; \mathbf{v}_i), \phi_i)  - \mathcal{L}_i(h(\varphi_i; \mathbf{v}_i), \phi^*_i) \right| \leq L_\phi||\phi_i - \phi^*_i||, \\
    &\left| \mathcal{L}_i(h(\varphi_i; \mathbf{v}_i), \phi_i)  - \mathcal{L}_i(h^*(\varphi_i; \mathbf{v}_i), \phi_i) \right| \leq L_h||h_i - h^*_i||, \\
    &\left|| h(\varphi_i^*; \mathbf{v}_i)  - h(\varphi_i; \mathbf{v}_i)) \right|| \leq L_\varphi||\varphi_i - \varphi^*_i||, \\
    &\left|| h(\varphi_i; \mathbf{v}_i^*)  - h(\varphi_i; \mathbf{v}_i)) \right|| \leq L_\mathbf{v}||\mathbf{v}_i - \mathbf{v}^*_i||. 
\end{aligned}
\end{equation*}
\end{assumption}

\begin{theorem}
\label{theorem2}
Suppose we select $m$ clients at each communication round. Let the hypernetwork parameter space be of dimension $H$, the embedding space be of dimension $d$ and the classifier parameter space be of dimension $K$. Let the ${\phi}$, $\mathbf{v}$, ${\varphi}$ be the parameters learned from the individual dataset of clients. When Assumption \ref{assumption_gen_bound} holds, there exists 
{\small
\begin{equation} \label{eq:sample_size}
\begin{aligned}
S = &\mathcal{O}\bigg(\frac{d+H+K}{\epsilon^2}\log\Big(\frac{R(L_hL_\varphi+L_h L_v+ L_\phi)}{\epsilon}\Big)\\
&+\frac{1}{m\epsilon^2}\log\frac{1}{\delta}\bigg),
\end{aligned}
\end{equation}
}such that if the number of samples per client is greater than $S$, then we have with probability at least $1 - \delta$ for all ${\phi}$, $\mathbf{v}$, ${\varphi}$, 
{\small
\begin{equation} 
| \sum_i^m \frac{n_i}{N}(\mathcal{L}_i(h(\varphi^*_i; \mathbf{v}_i^*), \phi^*_i) - \mathcal{L}_i(h(\varphi_i; \mathbf{v}_i), \phi_i))| \leq \epsilon, 
\end{equation}
}where ${\phi^*}$, $\mathbf{v^*}$, ${\varphi^*}$ are the optimal parameters corresponding to the distribution of each individual client, respectively. 
\end{theorem}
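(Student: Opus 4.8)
The plan is to prove this as a uniform-convergence (covering-number) bound in the style of Baxter's inductive-bias learning framework, in which the benefit of sharing one hypernetwork across clients surfaces as the $1/m$ factor in the confidence term. Throughout I read $\mathcal{L}_i$ as the population risk of client $i$, write $\hat{\mathcal{L}}_i$ for its empirical counterpart over the $n_i$ local samples, and regard $(\varphi_i,\mathbf{v}_i,\phi_i)$ as the empirically optimal parameters, so that the target quantity $\sum_i \frac{n_i}{N}(\mathcal{L}_i(\text{opt})-\mathcal{L}_i(\text{learned}))$ is a weighted excess-risk gap to be controlled uniformly over the joint parameter class.

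First I would collapse the loss into a single Lipschitz function of the concatenated parameter. Composing the four inequalities of Assumption \ref{assumption_gen_bound} — perturbing $\varphi$ moves the hypernetwork output by $L_\varphi\|\Delta\varphi\|$ and hence the loss by $L_hL_\varphi\|\Delta\varphi\|$, perturbing $\mathbf{v}$ likewise by $L_hL_\mathbf{v}\|\Delta\mathbf{v}\|$, and perturbing $\phi$ directly by $L_\phi\|\Delta\phi\|$ — the map $(\varphi,\mathbf{v},\phi)\mapsto\mathcal{L}_i(h(\varphi;\mathbf{v}),\phi)$ is Lipschitz in the joint parameter with constant $\Lambda := L_hL_\varphi + L_hL_\mathbf{v} + L_\phi$, exactly the quantity appearing inside the logarithm of the claimed $S$. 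Since $(\varphi,\mathbf{v},\phi)$ lives in a radius-$R$ ball of dimension $H+d+K$, a standard volumetric estimate yields a $\rho$-net $\mathcal{C}$ with $\log|\mathcal{C}| \le (H+d+K)\log(3R/\rho)$; choosing $\rho = \Theta(\epsilon/\Lambda)$ so the loss varies by at most $O(\epsilon)$ within each cell reduces the supremum over the continuum to a union bound over $|\mathcal{C}|$ points with $\log|\mathcal{C}| = O\big((H+d+K)\log(R\Lambda/\epsilon)\big)$.

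The concentration step is where the federated structure enters. For a fixed net point the per-client empirical risk concentrates around the population risk by Hoeffding's inequality, the loss being bounded through the radius-$R$ constraint together with Lipschitz continuity. Crucially, instead of union-bounding the $m$ per-client failure events, I would concentrate the single weighted average $\sum_i \frac{n_i}{N}(\mathcal{L}_i-\hat{\mathcal{L}}_i)$ directly: because the clients' samples are independent, this average of $m$ bounded quantities has variance scaled by $1/m$, which is precisely what produces the $\frac{1}{m\epsilon^2}\log\frac1\delta$ term rather than a naive $\frac{1}{\epsilon^2}\log\frac{m}{\delta}$. Combining the Hoeffding tail over the net, the union bound $\log|\mathcal{C}|$, and the $O(\epsilon)$ discretization error, then inverting to solve for the required samples per client, gives the stated $S$; a final ERM decomposition (adding and subtracting empirical risks at the learned and optimal parameters, using that the learned parameters minimize empirical risk) upgrades uniform convergence into the excess-risk conclusion.

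I expect the main obstacle to be obtaining the $1/m$ improvement rigorously: it is the signature of Baxter-style multi-task bounds and hinges on treating the cross-client weighted average as a single concentration object exploiting independence, rather than applying a union bound client-by-client. A secondary technical point is justifying boundedness of the loss for Hoeffding and fixing the constants in the Lipschitz-to-covering translation so that the logarithm's argument resolves to $R\Lambda/\epsilon$ with $\Lambda = L_hL_\varphi+L_hL_\mathbf{v}+L_\phi$ exactly.
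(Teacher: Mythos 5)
Your proposal is correct and follows essentially the same route as the paper: both reduce the statement to a Baxter-style covering-number bound, compute the covering number of the radius-$R$ parameter ball via the composed Lipschitz constant $L_hL_\varphi+L_hL_{\mathbf{v}}+L_\phi$ (exactly the paper's triangle-inequality chain), and obtain the $\frac{1}{m\epsilon^2}\log\frac{1}{\delta}$ confidence term from averaging over the $m$ clients. The only difference is that the paper imports the concentration-plus-union-bound step wholesale as Theorem 4 of \cite{baxter2000model}, whereas you sketch re-deriving it via Hoeffding on the weighted cross-client average; this is more self-contained but establishes the same bound.
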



\begin{proof}    
First, we define the distance between ($\mathbf{\phi}$, $\mathbf{v}$, $\mathbf{\varphi}$) and ($\mathbf{\phi^*}$, $\mathbf{v^*}$, $\mathbf{\varphi^*}$) as 
\begin{equation}
\begin{aligned}
&d\big((\mathbf{\phi}, \mathbf{v}, \mathbf{\varphi}),(\mathbf{\phi^*}, \mathbf{v^*}, \mathbf{\varphi^*})\big) \\
= &\Big| \sum_i^m \frac{n_i}{N}(\mathcal{L}_i(h(\varphi^*_i; \mathbf{v}_i^*), \phi^*_i) - \mathcal{L}_i(h(\varphi_i; \mathbf{v}_i), \phi_i))\Big|,
\end{aligned}
\end{equation}
where $N = \sum_{i=1}^m n_i$.
By the Theorem 4 from \cite{baxter2000model}, we can find an $\epsilon$-covering in $d((\mathbf{\phi}, \mathbf{v}, \mathbf{\varphi}),(\mathbf{\phi^*}, \mathbf{v^*}, \mathbf{\varphi^*}))$. Then, according to the notations used in our paper, we have that $S = \mathcal{O}(\frac{1}{m\epsilon^2}log(\frac{\mathcal{C}(\epsilon, \mathcal{H}_l^n)}{\delta}))$, where $\mathcal{C}(\epsilon, \mathcal{H}_l^n)$ is the covering number of $\mathcal{H}_l^n$. In our case, each element of $\mathcal{H}_l^n$ is parameterized by $\mathbf{\phi}, \mathbf{v}, \mathbf{\varphi}$. Therefore, from the triangle inequality and the Lipschitz conditions in Assumption \ref{assumption_gen_bound}, we can get 
{\small
\begin{equation}
\begin{aligned}
&d\big((\mathbf{\phi}, \mathbf{v}, \mathbf{\varphi}),(\mathbf{\phi^*}, \mathbf{v^*}, \mathbf{\varphi^*})\big) \\
= &\big|\sum_i^m\frac{n_i}{N}(\mathcal{L}_i(h(\varphi^*_i; \mathbf{v}_i^*), \phi^*_i) -  \mathcal{L}_i(h(\varphi_i; \mathbf{v}_i), \phi_i))\big| \\
= &\big|\sum_i^m\frac{n_i}{N}(\mathcal{L}_i(h(\varphi^*_i; \mathbf{v}_i^*), \phi^*_i) -  \mathcal{L}_i(h(\varphi_i; \mathbf{v}_i^*), \phi_i^*) \\
&+ \mathcal{L}_i(h(\varphi_i; \mathbf{v}_i^*), \phi_i^*) -  \mathcal{L}_i(h(\varphi_i; \mathbf{v}_i), \phi_i^*) + \mathcal{L}_i(h(\varphi_i; \mathbf{v}_i), \phi_i^*) \\
&-  \mathcal{L}_i(h(\varphi_i; \mathbf{v}_i), \phi_i))\big| \\
\leq &\sum_i^m \frac{n_i}{N}\big| (\mathcal{L}_i(h(\varphi^*_i; \mathbf{v}_i^*), \phi^*_i) - \mathcal{L}_i(h(\varphi_i; \mathbf{v}_i^*), \phi_i^*) \\ 
&+  \mathcal{L}_i(h(\varphi_i; \mathbf{v}_i^*), \phi_i^*) -  \mathcal{L}_i(h(\varphi_i; \mathbf{v}_i), \phi_i^*) \\
&+  \mathcal{L}_i(h(\varphi_i; \mathbf{v}_i), \phi_i^*) -  \mathcal{L}_i(h(\varphi_i; \mathbf{v}_i), \phi_i))\big| \\
\leq &\sum_{i=1}^m\frac{n_i}{N} (L_h||h(\varphi_i^*; \mathbf{v}_i^*)-h(\varphi_i; \mathbf{v}_i^*)||+L_h||h(\varphi_i; \mathbf{v}_i^*) \\
&- h(\varphi_i; \mathbf{v}_i)||+L_\phi||\phi_i^* - \phi_i||) \\
\leq & L_h L_\varphi||\varphi_i^*-\varphi_i|| + L_h L_v||\mathbf{v}_i^* - \mathbf{v}_i||+L_\phi||\phi_i^* - \phi_i||.
\end{aligned}
\end{equation}
}

\noindent Now if there is a parameter space such that $\phi_i$, $\mathbf{v}_i$ and $\varphi_i$ have corresponding optimal point $\phi_i^*$, $\mathbf{v}_i^*$ and $\varphi_i^*$, which are $\frac{\epsilon}{L_h L_\varphi+ L_h L_v+L_\phi}$ away, respectively, we can get an upper bound of the distance between our model and optimal model, which is an $\epsilon$-covering in $d((\mathbf{\phi}, \mathbf{v}, \mathbf{\varphi}),(\mathbf{\phi^*}, \mathbf{v^*}, \mathbf{\varphi^*}))$ matrix. From here we see that $\log(\mathcal{C}(\epsilon, \mathcal{H}_l^n)) = \mathcal{O}(m(d+H+K)\log(\frac{RL_h(L_\varphi + L_v)+R L_\phi}{\epsilon}))$.  
\end{proof}

Theorem \ref{theorem2} suggests that $S$ is influenced by several factors: the dimension of the parameters space, the number of clients, and the values of the Lipschitz constants. Specifically, the first part of right hand side of Eq. (\ref{eq:sample_size}) is determined by the dimensions of the embedding vectors, hypernetwork parameters, and classifier parameters. This component is independent of the number of clients $m$, as each client has its unique embedding vector, hypernetwork and classifier.
Additionally, this theorem points out that generalization depends on the Lipschitz constants, which influence the effective space reachable by the personalized models of clients. This indicates a trade-off between the generalization ability and the flexibility of the personalized model. 

\section{Related Work}
\label{app_sec:related_work}

\paragraph{Gradient Inversion Attacks.} 
Gradient Inversion Attacks (GIA) \cite{fredrikson2015model, zhu2019deep} is a class of adversarial attacks that exploit the gradients of a machine learning model to infer sensitive information about the training data by leveraging the fact that gradients contain information about the relationship between the input and the model's output. The basic idea behind GIA is to intentionally modify the input data in a way that maximizes the magnitude of the gradients with respect to the sensitive information of interest. By iteratively adjusting the input data based on the gradients, an attacker can gradually approximate the sensitive information, such as private attributes or training data samples, that the model was trained on \cite{phong2017privacy, zhu2019deep, geiping2020inverting, yin2021see, luo2022effective, geng2023improved, kariyappa2023cocktail}. GIA can pose a significant threat to privacy in scenarios where the model is used in sensitive applications or when the model's training data contains sensitive information. These attacks highlight the need for robust privacy protection mechanisms to mitigate the risk of information leakage through gradients.

\paragraph{Privacy Protection in Federated Learning.} 
Although the local data are not exposed in FL, the exchanged model gradients may still leak sensitive information about the data that can be leveraged by GIA to recover them \cite{geiping2020inverting, huang2021evaluating, li2022e2egi, hatamizadeh2023gradient, kariyappa2023cocktail}. To further protect the data privacy, additional defense methods have been integrated into FL, and can be categorized into three classes: Secure Multi-party Computing (SMC) \cite{yao1982protocols} based methods \cite{bonawitz2017practical, mugunthan2019smpai, mou2021verifiable, xu2022non}, Homomorphic Encryption (HE) \cite{gentry2009fully} based methods \cite{zhang2020batchcrypt, zhang2020privacy, ma2022privacy, park2022privacy} and Differential Privacy (DP) \cite{dwork2006differential} based methods \cite{geyer2017differentially, mcmahan2018learning, yu2020salvaging, bietti2022personalization, shen2023share}. 
SMC, originating from Yao's Millionaire problem \cite{yao1982protocols}, is a framework that aims to protect the input data of each participating party by employing encryption techniques during collaborative computations. With the development of FL, SMC techniques have evolved and been adapted to federated systems to enhance the protection of sensitive data through parameter encryption \cite{bonawitz2017practical, mugunthan2019smpai, mou2021verifiable, xu2022non}. HE, introduced by Gentry \cite{gentry2009fully}, is an encryption algorithm that preserves the homomorphic property of ciphertexts. In the context of FL, HE enables the central server to perform algebraic operations directly on encrypted parameters without the need for decryption \cite{zhang2020batchcrypt, zhang2020privacy, ma2022privacy, park2022privacy}. DP is a widely adopted privacy-preserving technique in both industry and academia by clipping the gradients and adding noise to personal sensitive attribute \cite{dwork2006differential, abadi2016deep}. In the context of FL, DP is employed to prevent inverse data retrieval by clipping gradients and adding noise to participants' uploaded parameters \cite{geyer2017differentially, mcmahan2018learning, yu2020salvaging, bietti2022personalization, shen2023share}. 
However, SMC and HE methods are unsuitable to DNN models due to their extremely high computation and communication cost, while DP methods usually introduce additional computation cost and result in a decrease in model performance \cite{bonawitz2017practical, zhang2020privacy, geyer2017differentially, mcmahan2018learning}. 
Apart from these defense methods with theoretical guarantees, there are other empirical yet effective defense strategies, such as gradient pruning/masking \cite{zhu2019deep, huang2021evaluating, li2022auditing}, noise addition \cite{zhu2019deep, wei2020framework, huang2021evaluating, li2022auditing}, Soteria \cite{sun2020provable}, PRECODE \cite{scheliga2022precode}, and FedKL \cite{ren2023gradient}. However, these methods still suffer from privacy-utility trade-off problems, as shown in Tables 2 and 5 in \cite{huang2021evaluating} for gradient pruning/masking, Tables 1, 2, and 3 in PRECODE \cite{scheliga2022precode}, Table 1 in FedKL \cite{ren2023gradient}, and Figure 5 in Soteria \cite{sun2020provable}. In contrast to these approaches, with the help of hypernetworks \cite{david2017hypernetworks}, this work proposes a novel FL framework that effectively “breaks the direct connection” between the shared parameters and the local private data to defend against GIA while achieving a favorable privacy-utility trade-off.

\paragraph{Hypernetworks in Federated Learning.} Hypernetworks \cite{david2017hypernetworks} are deep neural networks that generate the weights for another network, known as the target network, based on varying inputs to the hypernetwork. Recently, there have been some works that incorporate hypernetworks into FL for learning personalized models \cite{shamsian2021personalized, carey2022robust, li2023fedtp, tashakori2023semipfl, lin2023federated}. All of these methods adopt the similar idea that a central hypernetwork model are trained on the server to generate a set of models, one model for each client, which aims to generate personalized model for each client. 
Since the hypernetwork and client embeddings are trained on the server, which makes the server possessing all the information about the local models, enabling the server to recover the original inputs by GIA ((see Table \ref{tab:attack} and Figure \ref{fig:attack} in Appendix)).
In contrast to existing approaches, this work presents a Hypernetwork Federated Learning (HyperFL) framework, which prioritizes data privacy preservation over personalized model generation through the utilization of hypernetworks.

\section{Additional Experimental Results and Experimental Details.}

\subsection{Details of Experimental Setup} \label{sec:exp_details}

\paragraph{Datasets.} 
For the Main Configuration HyperFL, we evaluate our method on four widely-used image classification datasets: (1) EMNIST (Extended MNIST) \cite{cohen2017emnist}, a dataset with 62 categories of handwritten characters, including 10 digits, 26 uppercase letters, and 26 lowercase letters; (2) Fashion-MNIST \cite{xiao2017fashion}, a dataset designed for fashion product images, containing 10 categories of clothing items; (3) CIFAR-10 \cite{krizhevsky2009learning}, a widely used benchmark dataset for image classification tasks, consisting of 60,000 color images distributed across 10 different classes; and (4) CINIC-10 \cite{darlow2018cinic}, a composite image dataset that combines samples from CIFAR-10 and ImageNet \cite{russakovsky2015imagenet}, comprising 270,000 images spanning 10 different classes.

Similar to \cite{karimireddy2020scaffold, zhang2021personalized, xu2023personalized}, we create a non-IID data distribution by ensuring all clients have the same data size, in which $s\%$ of data ($20\%$ by default) are uniformly sampled from all classes and the remaining $(100 - s)\%$ from a set of dominant classes for each client. Following \cite{xu2023personalized}, we evenly divide all clients into multiple groups, with each group having the same dominant classes. Specifically, for the 10-category Fashion-MNIST, CIFAR-10 and CINIC-10 datasets, we divide clients into 5 groups. Each group is assigned three consecutive classes as the dominant class set, starting from class 0, 2, 4, 6, and 8 for the respective groups. For EMNIST dataset, we divide clients into 3 groups, with each group assigned the dominant set of digits, uppercase letters, and lowercase letters, respectively.

For the HyperFL-LPM, we evaluate our method on the EMNIST \cite{cohen2017emnist} and CIFAR-10 \cite{krizhevsky2009learning} datasets.

\paragraph{Model Architectures.} 
For the Main Configuration HyperFL, simlar to \cite{xu2023personalized}, we adopt two different CNN target models for EMNIST/Fashion-MNIST and CIFAR-10/CINIC-10, respectively. The first CNN target model is built with two convolutional layers. The first CNN target model is built with two convolutional layers (16 and 32 channels) followed by max pooling layers, two fully-connected layers (128 and 10 units), and a softmax output layer, using LeakyReLU activation functions \cite{xu2015empirical}. The second CNN model is similar to the first one but adds one more 64-channel convolution layer. The hypernetwork is a fully-connected neural network with one hidden layer, multiple linear heads per target weight tensor. The client embeddings are learnable vectors with dimension equals 64. 

For the HyperFL-LPM, we adopt the ViT-S/16 \cite{dosovitskiy2021image} and ResNet-18 \cite{he2016deep} pre-trained on the ImageNet dataset \cite{deng2009imagenet} as the feature extractor. When the pre-trainde model is ResNet, the adapter is inserted behind each resnet block. The adapter within each transformer block consists of a down-projection layer, ReLU activation functions \cite{nair2010rectified}, and a up-projection layer. The hypernetwork is a fully-connected neural network with one hidden layer, multiple linear heads per target weight tensor. The client embeddings are learnable vectors with dimension equals 64. 

\begin{figure*}[h]
    \centering
\includegraphics[width=1\textwidth]{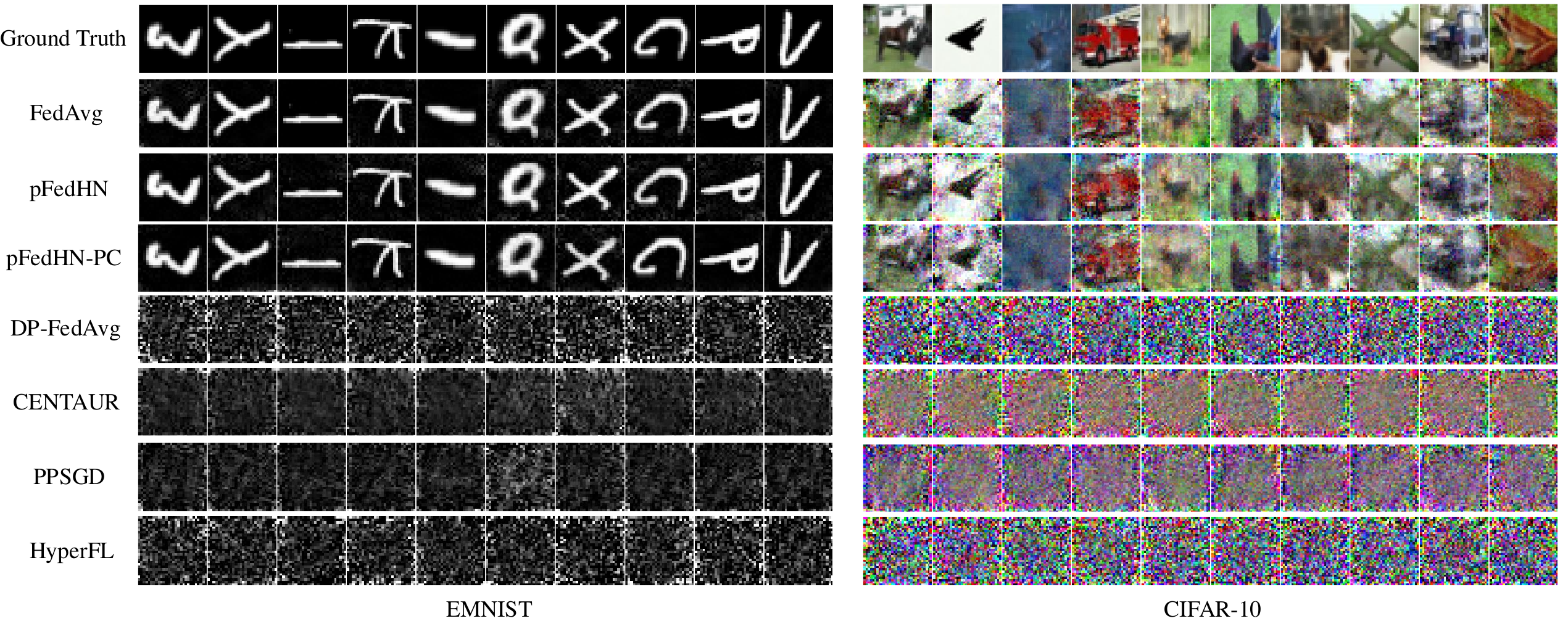}
    \caption{Reconstructed images of IG.}
    \label{fig:attack}
\end{figure*}

\paragraph{Compared Methods.}  
For the Main Configuration HyperFL, we compare the proposed method with the following approaches: (1) Local-only, where clients train models locally without collaboration; (2) FedAvg \cite{mcmahan2017communication}, a widely-used FL method; (3) pFedHN \cite{shamsian2021personalized}, that utilizes a central hypernetwork model trained on the server to generate a set of models, one model for each client; and some DP-based FL methods, including (4) DP-FedAvg \cite{mcmahan2018learning}, which incorporates differential privacy into FedAvg; (5) PPSGD \cite{bietti2022personalization}, a personalized private SGD algorithm with user-level differential privacy; and (6) CENTAUR \cite{shen2023share}, which trains a single differentially private global representation extractor while allowing personalized classifier heads. However, all these compared DP-based FL methods (i.e., DP-FedAvg, PPSGD, and CENTAUR) focus on user-level DP setting \cite{mcmahan2018learning}, which cannot guarantee protection against \textit{honest-but-curious} server attacks as they upload original gradients to the server. Therefore, we adapt these methods to fit the ISRL-DP setting \cite{lowy2023private}, where users trust their own client but not the server or other clients, thereby defending against \textit{honest-but-curious} server attacks.

For the HyperFL-LPM, we compare our method with (1) Local-only with fixed feature extractor; (2) Local-only with adapter fine-tuning; (3) FedAvg with fixed feature extractor; and (4) FedAvg with adapter fine-tuning.

\paragraph{Training Settings.} 
For the Main Configuration HyperFL, mini-batch SGD \cite{ruder2016overview} is adopted as the local optimizer for all approaches. Similar to \cite{xu2023personalized}, we set the step sizes $\eta_h$ and $\eta_v$ for local training to 0.01 for EMNIST/Fashion-MNIST and 0.02 for CIFAR-10/CINIC-10. The setp size $\eta_g$ is set to 0.1 for all the datasets. The weight decay is set to 5e-4 and the momentum is set to 0.5. The batch size is fixed to B = 50 for all datasets except EMNIST (B = 100). The client embedding dimension is set to 64. The number of local training epochs is set to 5 for all FL approaches and the number of global communication rounds is set to 200 for all datasets. Furthermore, following \cite{xu2023personalized}, we conduct experiments on two setups, where the number of clients is 20 and 100, respectively. For the latter, we apply random client selection with sampling rate 0.3 along with full participation in the last round. The training data size per client is set to 600 for all datasets except EMNIST, where the size is 1000. For the DP-based FL methods, the DP budget $\epsilon$ is set to 4 and the Gaussian noise $\sigma$ is $1e-5$ to satisfy the ($\epsilon$, $\sigma$) privacy guarantee. Average test accuracy of all local models is reported for performance evaluation. 

For the HyperFL-LPM, we conducted experiments with 20 clients. Furthermore, differently from HyperFL, batch size 16 is adopted for all datasets. The step sizes $\eta_h$ and $\eta_v$ for local training are 0.02 for EMNIST and 0.1 for CIFAR-10 when using ViT pre-trained models. When using ResNet pre-trained models, the step size is set to 0.01 for all datasets.

\paragraph{Privacy Evaluation.} 
EMNIST and CIFAR-10 are used to evaluate privacy preservation capability of the proposed HyperFL. We choose a subset of 50 images from each dataset to evaluate the privacy leakage. A batch size of one is used.
For experimental comparison, we set all the unknown variable in HyperFL are learnable and optimized simultaneously for IG \cite{geiping2020inverting} and ROG \cite{yue2023gradient}.
For the optimization of IG \cite{geiping2020inverting}, we optimize the attack for 10,000 iterations using the Adam optimizer \cite{kingma2015adam}, with an initial learning rate of 0.1. The learning rate is decayed by a factor of 0.1 at 3/8, 5/8, and 7/8 of the optimization process. The coefficient of the TV regularization term is set to 1e-6.
For the optimization of ROG \cite{yue2023gradient}, the Adam optimizer \cite{kingma2015adam} with a learning rate of 0.05 is adopted, and the total number of iterations is set to 100.
To further demonstrate the privacy preservation capability of the proposed HyperFL, we design a tailored attack method. Specifically, we first recover the client embedding according to Eq. (\ref{eq:recover_v}), and then recover the input data by solving the upper-level subproblem in objective Eq. (\ref{eq:att_ours_bilevel}). Since $\Delta_\theta$ cannot be obtained \footnote{The client embedding changes at each iteration, making $\Delta_\theta$ impossible to calculate.}, we utilize model inversion attack methods \cite{he2019model,he2020attacking,jiang2022comprehensive,erdougan2022unsplit} to solve the upper-level subproblem in objective Eq. (\ref{eq:att_ours_bilevel}).
Peak signal to noise ratio (PSNR) \cite{hore2010image}, structural similarity (SSIM) \cite{wang2004image}, and learned perceptual image patch similarity (LPIPS) \cite{zhang2018unreasonable} are adopted as the metrics for reconstruction attacks on image data. Lower LPIPS, higher PSNR and SSIM of reconstructed images indicate better attack performance.

All experiments are conducted on NVIDIA GeForce RTX 3090 GPUs.

\begin{figure*}[h]
    \centering
\includegraphics[width=1\textwidth]{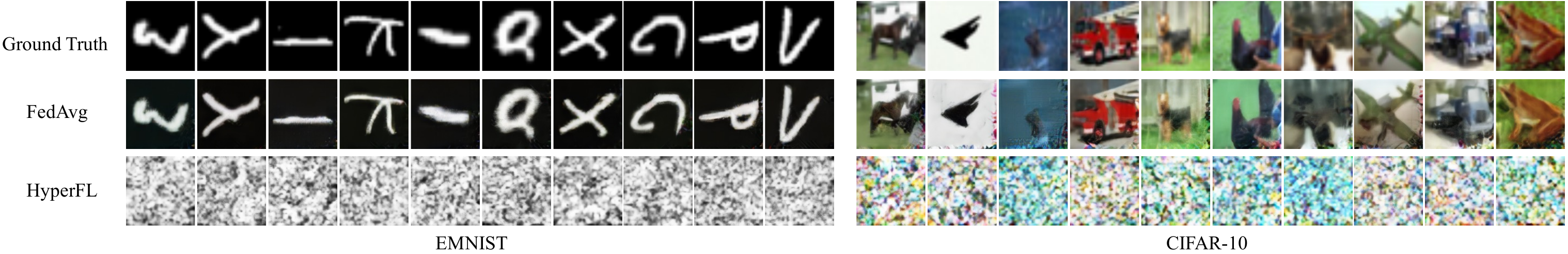}
    \caption{Reconstructed images of ROG.}
    \label{fig:attack_rog}
\end{figure*}

\subsection{Additional Experimental Results}

\paragraph{Privacy Evaluation.} 

The visualization results of IG \cite{geiping2020inverting} of the first 10 images are provided in Figure \ref{fig:attack}. From this figure, we can observe that the native FedAvg and pFedHN methods have a much higher risk of leaking data information, as indicated by the reconstructed images closely resembling the original ones. Although introducing DP improves data privacy, there is a significant drop in model performance, as shown in Table \ref{tab:main_result}. In contrast, HyperFL achieves a similar level of privacy protection while outperforming all DP-based methods and the native FedAvg in terms of model accuracy.

The reconstructed and visualization results of ROG \cite{yue2023gradient} are provided in Table \ref{tab:attack_rog} and Figure \ref{fig:attack_rog}. From these results, we can observe that the proposed HyperFL can also defend against SOTA attack method.

\begin{table} [h]
        \centering
        \resizebox{0.75\linewidth}{!}{
          \begin{tabular}{l ccc ccc}
            \toprule
            & \multicolumn{3}{c}{EMNIST} & \multicolumn{3}{c}{CIFAR-10} \\
            \cmidrule(lr){2-4} \cmidrule(lr){5-7} 
            Method & PSNR  & SSIM  & LPIPS  & PSNR  & SSIM  & LPIPS  \\
            \midrule
            FedAvg & 24.26 & 0.9516 & 0.3024 & 23.09 & 0.9228 & 0.4363  \\
            \midrule
            HyperFL & 3.44 & 0.0459 & 0.7883 & 7.78 & 0.0137 & 0.7802  \\
            \bottomrule
          \end{tabular}
        }
    \caption{Reconstruction results of ROG.}
\label{tab:attack_rog}
\vskip -0.1in
\end{table}

The reconstructed visualization results of the tailored attack method are presented in Figure \ref{fig:attack_tailored}. These results demonstrate that even the tailored attack method is unable to recover any information from the proposed HyperFL framework, thereby showcasing the robust privacy preservation capability of HyperFL.

\begin{figure}[h]
    \centering
\includegraphics[width=1\linewidth]{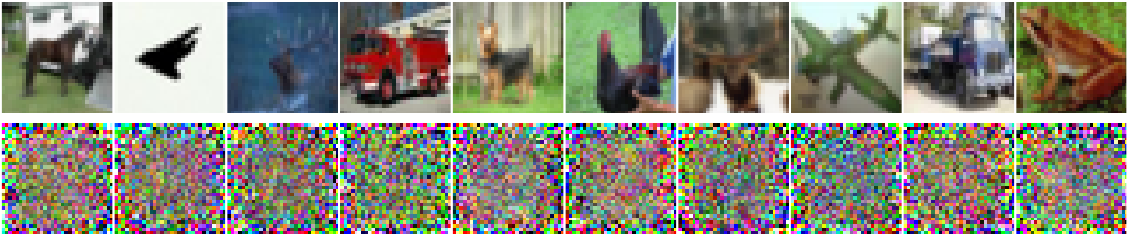}
    \caption{Reconstructed images of the tailored attack method. The first row contains the original images, while the second row shows the reconstruction results.}
    \label{fig:attack_tailored}
\vskip -0.1in
\end{figure}

\paragraph{Learned Client Embeddings.} 
In our experiments, we learn to represent each client using a trainable embedding vector $\mathbf{v}_i$. These embedding vectors are randomly initialized to the same value. By setting these embedding vectors trainable, they can learn a continuous semantic representation over the set of clients. The t-SNE visualization \cite{van2008visualizing} of the learned client embeddings of the EMNIST dataset with 20 clients is shown in Figure \ref{fig:emb_rela_t_sne}. Form this figure we can see that there is a distinct grouping of the learned client embeddings into three clusters, which aligns with the data partitioning we employed, as shown in Figure \ref{fig:label_dist}. This phenomenon demonstrates the meaningfulness of the learned client embeddings in capturing the underlying relationship of the clients. In this way, personalized feature extractor parameters for each client can be generated by taking the meaningful client embedding as input for the hypernetwork. Therefore, the model can achieve better performance by adopting personalized feature extractors.


\begin{figure}[h]
  \centering
  \subfigure[]{\includegraphics[width=0.48\linewidth]{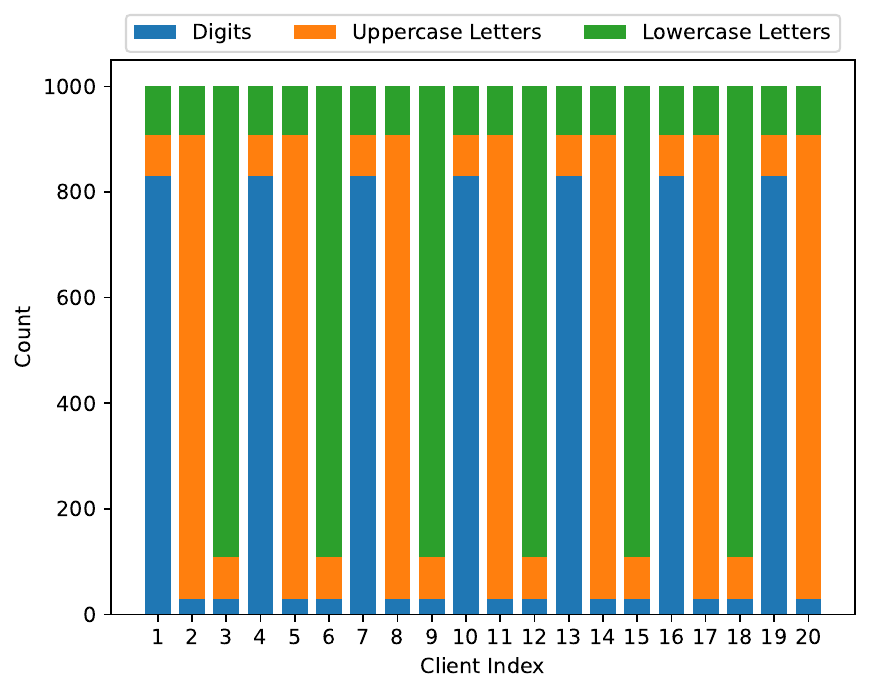} \label{fig:label_dist}}
  \hfill
  \subfigure[]{\raisebox{0.4cm}{\includegraphics[width=0.48\linewidth] {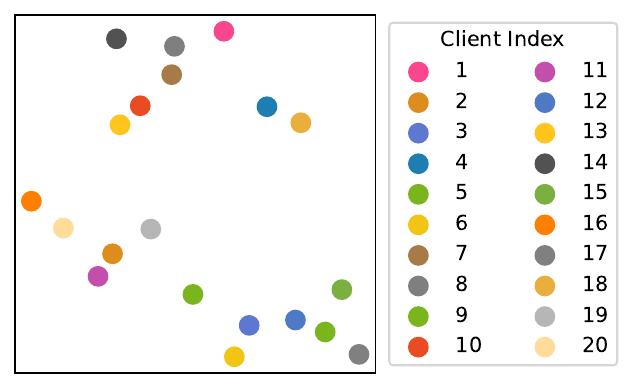}} \label{fig:emb_rela_t_sne}} 
  \vskip -0.1in
  \caption{(a) Label distribution of the EMNIST dataset with 20 clients. (b) The t-SNE visualization of the learned client embeddings of the EMNIST dataset with 20 clients.}
  \label{fig:simlarity}
\end{figure}


\end{document}